\useunder{\uline}{\ul}{}
\theoremstyle{plain}
\newtheorem{theorem}{Theorem}[section]
\newtheorem{proposition}[theorem]{Proposition}
\newtheorem{corollary}[theorem]{Corollary}
\theoremstyle{definition}
\newtheorem{definition}[theorem]{Definition}
\theoremstyle{remark}
\newtheorem{remark}[theorem]{Remark}
\DeclareMathOperator{\softmax}{softmax}
\DeclareMathOperator{\grad}{grad}
\DeclareMathOperator{\logsumexp}{logsumexp}
\newcommand*\diff{\mathop{}\!\mathrm{d}}
\DeclareMathOperator*{\esssup}{ess\,sup}
\title{$\alpha$-Flow: A Unified Framework for Continuous-State Discrete Flow Matching Models}
\author{
Chaoran Cheng$^1$,\,\,
Jiahan Li$^2$,\,\,
Jiajun Fan$^1$,\,\,
Ge Liu$^1$,\,\,\\
$^1$University of Illinois Urbana-Champaign,\,\,
$^2$Tsinghua University\\
Corresponding email: \texttt{chaoran7@illinois.edu}
}
\begin{document}

\maketitle

%%%%%%%%% ABSTRACT
\begin{abstract}
Recent efforts have extended the flow-matching framework to discrete generative modeling. One strand of models directly works with the continuous probabilities instead of discrete tokens, which we colloquially refer to as Continuous-State Discrete Flow Matching (CS-DFM). Existing CS-DFM models differ significantly in their representations and geometric assumptions. 
This work presents a unified framework for CS-DFM models, under which the existing variants can be understood as operating on different $\alpha$-representations of probabilities. Building upon the theory of information geometry, we introduce $\boldsymbol{\alpha}$-\textbf{Flow}, a family of CS-DFM models that adheres to the canonical $\alpha$-geometry of the statistical manifold, and demonstrate its optimality in minimizing the generalized kinetic energy.
Theoretically, we show that the flow matching loss for $\alpha$-flow establishes a unified variational bound for the discrete negative log-likelihood. 
We comprehensively evaluate five different instantiations of $\alpha$-flow on various discrete generation domains to demonstrate their effectiveness in discrete generative modeling, including intermediate values whose geometries have never been explored before.
$\alpha$-flow significantly outperforms its discrete-state counterpart in image and protein sequence generation and better captures the entropy in language modeling.
\end{abstract}

%%%%%%%%% BODY TEXT
\section{Introduction}
The success of flow matching \citep{lipman2022flow} and diffusion \citep{ho2020denoising} models has inspired recent attempts to adopt such generative frameworks to discrete domains.
Among the existing discrete flow matching (DFM) models, one strand of models directly works with the continuous representations of probabilities instead of discrete tokens, which we colloquially refer to as \emph{Continuous-State} DFM (CS-DFM). To leverage the continuous flow matching framework for the generative modeling of discrete data, all CS-DFM models face two key designs of:
\begin{enumerate}[topsep=0pt,itemsep=-1ex,partopsep=1ex,parsep=1ex]
    \item The continuous \emph{representation} of the discrete data; and
    \item The \emph{geometry} of the continuous representation.
\end{enumerate}
The continuous representation allows direct applications of flow matching, and the geometry defines the conditional flow on which the vector field can be regressed.

Existing CS-DFM models differ significantly in their choices of these two modeling approaches.
Similar to autoregressive \citep{radford2019language} or masked language models \citep{devlin2018bert}, the most straightforward choice of the continuous representations might be the logits of the categorical probabilities. Alternatively, one may work directly with probabilities with discrete data represented as a one-hot vector. Existing works often assume a Euclidean geometry such that the intermediate representation can be obtained by linear interpolation between the initial noise and the target representations \citep{li2024full,song2023equivariant}.
Two recent works \citep{cheng2024categorical,davis2024fisher} incorporated mathematical results from information geometry and designed a continuous representation of the categorical distribution on the unit sphere with the canonical spherical geometry.
In principle, different representations and geometries lead to different variants of CS-DFM. With such flexibility and diversity, no framework has yet been proposed to provide a unified and comparative analysis of the existing variants mentioned above.

This work fills in the missing gap for connecting existing CS-DFM models and presents a unified framework under which variants can be understood as operating on the different $\alpha$-\emph{representations} of the categorical distribution. Furthermore, rigorously built upon the theory of information geometry, we propose a family of CS-DFM models that adheres to the canonical $\alpha$-\emph{geometry} by following the $\alpha$-\emph{geodesics}, which we call $\boldsymbol\alpha$-\textbf{Flow}. Our $\alpha$-flow enjoys additional theoretical benefits as we demonstrate its optimality with respect to the generalized kinetic energy, encompassing some already-known results in previous work as special cases. Additionally, we establish a unified variational bound for the discrete negative log-likelihood (NLL), providing theoretical justifications for the flow matching loss in discrete generative modeling. 
We extensively evaluate different instantiations of $\alpha$-flow on the discrete generative modeling tasks in various domains, including computer vision, natural language processing, and bioinformatics, scaling up to billion-level tokens to study their empirical performance. Our $\alpha$-flow outperforms its discrete-state counterpart and is on par with the latter in language modeling. Heuristic rules for choosing proper model designs are also discussed.

% In contrast, the \emph{Discrete-State Discrete Flow Matching} (DS-DFM) works with discrete tokens, relying on parameterized jumps between tokens and often mimicking the masked language modeling scheme with an additional mask token.

\section{Preliminary}

\subsection{Riemannian Flow Matching}
Conditional flow matching (CFM) \citep{lipman2022flow} addresses generative modeling by learning a time-dependent vector field that pushes the prior noise distribution to any target data distribution. Such a flow-based model can be viewed as the continuous generalization of the diffusion model \citep{song2019generative,song2020improved,ho2020denoising} with a more flexible design of the denoising process. 
The Riemannian flow matching framework \citep{chen2023riemannian} further extends CFM to general manifolds on which a distance metric can be efficiently computed.
Mathematically, on a smooth Riemannian manifold $\mathcal{M}$ with the Riemannian metric $g$, a \emph{flow} $\psi_t:[0,1]\times \mathcal{M}\to\mathcal{M}$ is a time-dependent diffeomorphism defined by a time-dependent vector field $u_t:[0,1]\times \mathcal{M}\to T\mathcal{M}$ via the ordinary differential equation (ODE): $\frac{\diff}{\diff t}\psi_t(x)=u_t(\psi_t(x))$. The flow matching objective directly regresses the vector field $u_t$ with a time-dependent neural net. Both \citet{lipman2022flow,chen2023riemannian} demonstrated that a tractable objective can be derived by conditioning on the target data $x_1$ at $t=1$ of the probability path. In this way, the Riemannian flow matching objective can be formulated as
\begin{equation}
    \mathcal{L}_\text{RFM}=\mathbb{E}_{t,p_0(x),q(x)}\|v(x_t,t)-u_t(x_t|x_0,x_1)\|_g^2 \label{eqn:rfm_loss}
\end{equation}
where $q$ is the data distribution, $p_0$ is the prior distribution, and $x_t:=\psi_t(x|x_0,x_1)$ denotes the conditional flow. \citet{chen2023riemannian} further demonstrated that if the exponential map and logarithm map can be evaluated in closed-form, the condition flow can be defined as $x_t=\exp_{x_0}(t\log_{x_0}x_1)$, and the corresponding vector field can be calculated as $u_t(x_t|x_0,x_1)=\log_{x_t}(x_1)/(1-t)$.
We will adopt the notations in Riemannian geometry in this work. We use $\gamma_{x_0,x_1}(t):=\psi_t(x|x_0,x_1)$ to denote the geodesic from $x_0$ to $x_1$ and $\dot{\gamma}_{x_0,x_1}(t):=u_t(x_t|x_0,x_1)$ to denote the vector field along the geodesic.

\subsection{Statistical Manifold}
It is known from information theory \citep{rao1992information,amari2000methods,ay2017information} that any parameterized family of probability distributions $p_\theta$ forms a Riemannian structure known as the \emph{statistical manifold} whose canonical Riemannian metric is the \textit{Fisher information metric}.
% \begin{equation}
%     g_{ij}(\theta)=\mathbb{E}_X\left[\frac{\partial\log p(X;\theta)}{\partial \theta_i}\frac{\partial\log p(X;\theta)}{\partial \theta_j}\right]. \label{eqn:fisher}
% \end{equation}
As we are interested in discrete generative modeling, we naturally focus on the statistical manifold of categorical distributions. Mathematically, consider the discrete sample space $\mathcal{X}=\{1,2,\dots,n\}$, a categorical distribution $\mu\in\mathcal{P}(\mathcal{X})$ can be defined by its categorical probabilities $\mu:\sum_{i=1}^n \mu_i=1,0\le \mu_i\le 1$.
A vector field $a$ in the tangent space $T_\mu\mathcal{P}$ can be identified with $\sum_{i=1}^n a_i=0$.
The canonical inner product at a positive categorical distribution $\mu\in\mathcal{P}_+$, in this case, can be written as
\begin{equation}
    \langle a,b\rangle_\mu=\sum_{i=1}^n \frac{a_i b_i}{\mu_i},\quad a,b\in T_\mu \mathcal{P} .\label{eqn:inner}
\end{equation}
We also use $\|a\|_g=\sqrt{\langle a,a\rangle_\mu}$ to denote the canonical Riemannian norm.
The geometric view of statistical manifolds allows us to derive crucial geometric concepts of exponential and logarithm maps for applying Riemannian flow matching. Specifically, we may define \emph{geodesics}, curves on the Riemannian manifold where a point travels at a ``constant speed''. We note that, even if equipped with the same Fisher information metric, it is also possible to define different families of geodesics on the statistical manifold using different \emph{affine connections} (see Appendix \ref{supp:riemannian}). Previous works \citep{cheng2024categorical,davis2024fisher} already explored the Levi-Civita geodesics for which the traditional definition of local length-minimizing curves is recovered. Our work extends this idea to general families of $\alpha$-geodesics for flow-based generative modeling.

\section{Method}
% We first summarize our major results of $\alpha$-geodesic flow that addresses the two key designs (representation and geometry) of CS-DFM models:
% \begin{proposition}
% Different variants of CS-DFM models can be understood as operating on the $\alpha$-\emph{representations} with $\alpha$-\emph{geometry} of the statistical manifold.
% \end{proposition}
% To justify such a proposition, we will first introduce $\alpha$-geometry and proceed with a unified variational bound on the discrete log-likelihood and optimality. A detailed classification is deferred to Section \ref{sec:classification} and Table \ref{tab:class}. 
\subsection{$\alpha$-Representation of Categorical Distribution}
We first introduce $\alpha$-representation, a family of continuous representations of the categorical distribution.
\begin{definition}[$\alpha$-representation]\label{def:alpha_rep}
For $\alpha\in[-1,1]$, the $\alpha$-\emph{representation} of a positive categorical distribution $\mu\in\mathcal{P}_+$ is defined by the embedding $\pi^{(\alpha)}:\mathcal{P}_+\hookrightarrow\mathbb{R}_+^n$:
\begin{equation}
\mu\mapsto x=
\begin{cases}
    \mu^\frac{1-\alpha}{2}, &\alpha\ne 1\\
    \log \mu, &\alpha=1
\end{cases} .
\label{eqn:mapping}
\end{equation}
\end{definition}
%In this way, the probability itself is the ($-1$)-representation and the logit is the $1$-representations. \citet{cheng2024categorical,davis2024fisher} operates on the $0$-representation $x=\sqrt\mu$.
This perspective allows us to categorize existing CS-DFM methods by their choice of representations: raw probability ($\alpha=-1$), logit ($\alpha=1$), or half-density $\sqrt{\mu}$ ($\alpha=0$, \citet{cheng2024categorical,davis2024fisher}).
A detailed classification is deferred to Section~\ref{sec:classification} and Table~\ref{tab:class}. 
Existing methods often assume a linear interpolation path between the $\alpha$-representations \citep{song2023equivariant,li2024full}. Although such a Euclidean assumption is always available, we demonstrate that a more canonical and mathematically rigorous choice of geometry can be introduced.

\subsection{$\alpha$-Geometry on Statistical Manifold} \label{sec:alpha_geometry}
$\alpha$-representations are closely related to $\alpha$-\emph{connections} that define different geometry on the statistical manifold (see Appendix~\ref{supp:alpha_conn}). Let $p=2/(1-\alpha)$, the $\alpha$-representation lies on the positive orthant of the unit $L_p$ sphere. Mathematical literature \citep{morozova1991markov,ay2017information} demonstrated that the $\alpha$-geodesics adopt the following general form:
\begin{equation}
     \gamma^{(\alpha)}_{\mu,\nu}(t)^{1/p}=\frac{(1-\tau_{\mu,\nu}(t))\mu^{1/p}+\tau_{\mu,\nu}(t)\nu^{1/p}}{\|(1-\tau_{\mu,\nu}(t))\mu^{1/p}+\tau_{\mu,\nu}(t)\nu^{1/p}\|_p}
\end{equation}
where $\tau_{\mu,\nu}$ is an appropriate reparameterization of time with $\tau_{\mu,\nu}(0)=0,\tau_{\mu,\nu}(1)=1$. The exponential map and logarithm map can be calculated as:
\begin{align}
    \exp^{(\alpha)}_\mu(a)^{1/p}&=\frac{\mu^{1/p}+\frac{\tau_{\mu,a}(t)\mu^{1/p}}{p\mu}a}{\|\mu^{1/p}+\frac{\tau_{\mu,a}(t)\mu^{1/p}}{p\mu}a\|_p},\label{eqn:ori_exp}\\
    \log^{(\alpha)}_\mu(\nu)&=\dot\tau_{\mu,\nu}(0) p P_\mu \left(\mu(\nu/\mu)^{1/p}\right),\label{eqn:ori_log}
    % &\log^{(\alpha)}_\mu(\nu)=\dot\tau_{\mu,\nu}(0) p\mu \left(\left(\frac{\nu}{\mu}\right)^{1/p}-\sum_j\mu_j\left(\frac{\nu_j}{\mu_j}\right)^{1/p}\right),\label{eqn:ori_log}
\end{align}
where $P_\mu(w)=w-\mu\sum_j w_j$ denotes the orthogonal projection onto the tangent space $T_\mu\mathcal{P}$ and $\tau_{\mu,a}$ is the compatible reparameterization with $\tau_{\mu,a}(0)=0,\dot\tau_{\mu,a}(1)=1$. The limit case of $\alpha=1,p=\infty$ is discussed in Appendix~\ref{supp:alpha_limit}.
We call such a flow matching model on the $\alpha$-geometry $\alpha$-\emph{flow}.
In this way, the $\alpha$-flow loss can be written as
\begin{equation}
    \mathcal{L}^{(\alpha)}=\mathbb{E}_{t,p_0(\mu),q(\mu)}\|v_\theta(\gamma^{(\alpha)}(t),t)-\dot{\gamma}^{(\alpha)}(t)\|_g^2, \label{eqn:loss}
\end{equation}
where $\gamma^{(\alpha)}$ denotes the $\alpha$-geodesic from $\mu_0$ to $\mu_1$ and $\dot\gamma^{(\alpha)}$ denotes the vector field along the $\alpha$-geodesic. 
% We call this unified framework $\alpha$-\emph{geodesic flow}, which equips the statistical manifold with different Riemannian geometries for generative modeling. 
The different exponential maps and interpolations are visualized in Figure \ref{fig:geodesic}. One may observe the different curvatures of the underlying geometries when $\alpha$ varies.

\begin{figure}
    \centering
    \includegraphics[width=0.49\linewidth]{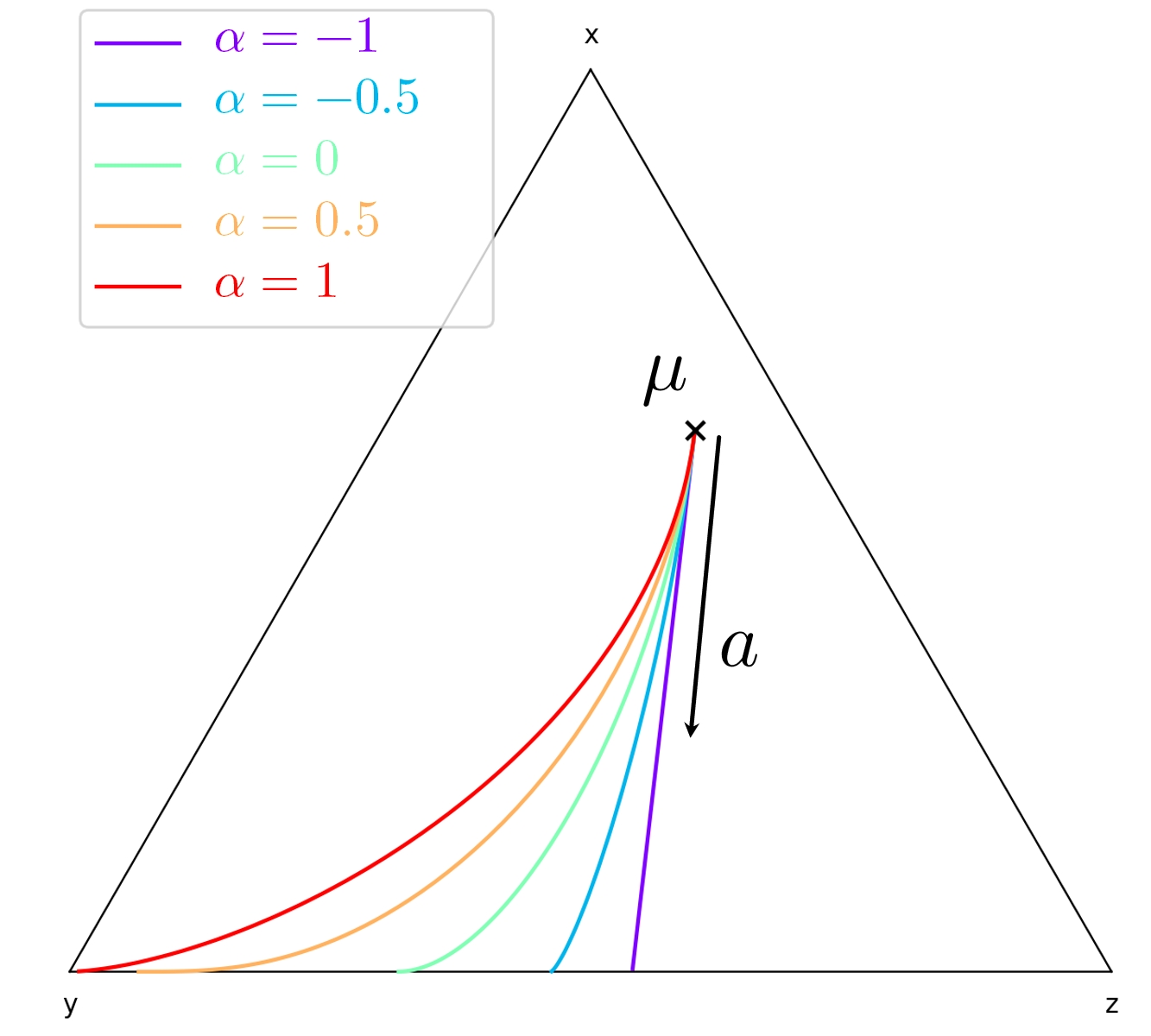}
    \includegraphics[width=0.49\linewidth]{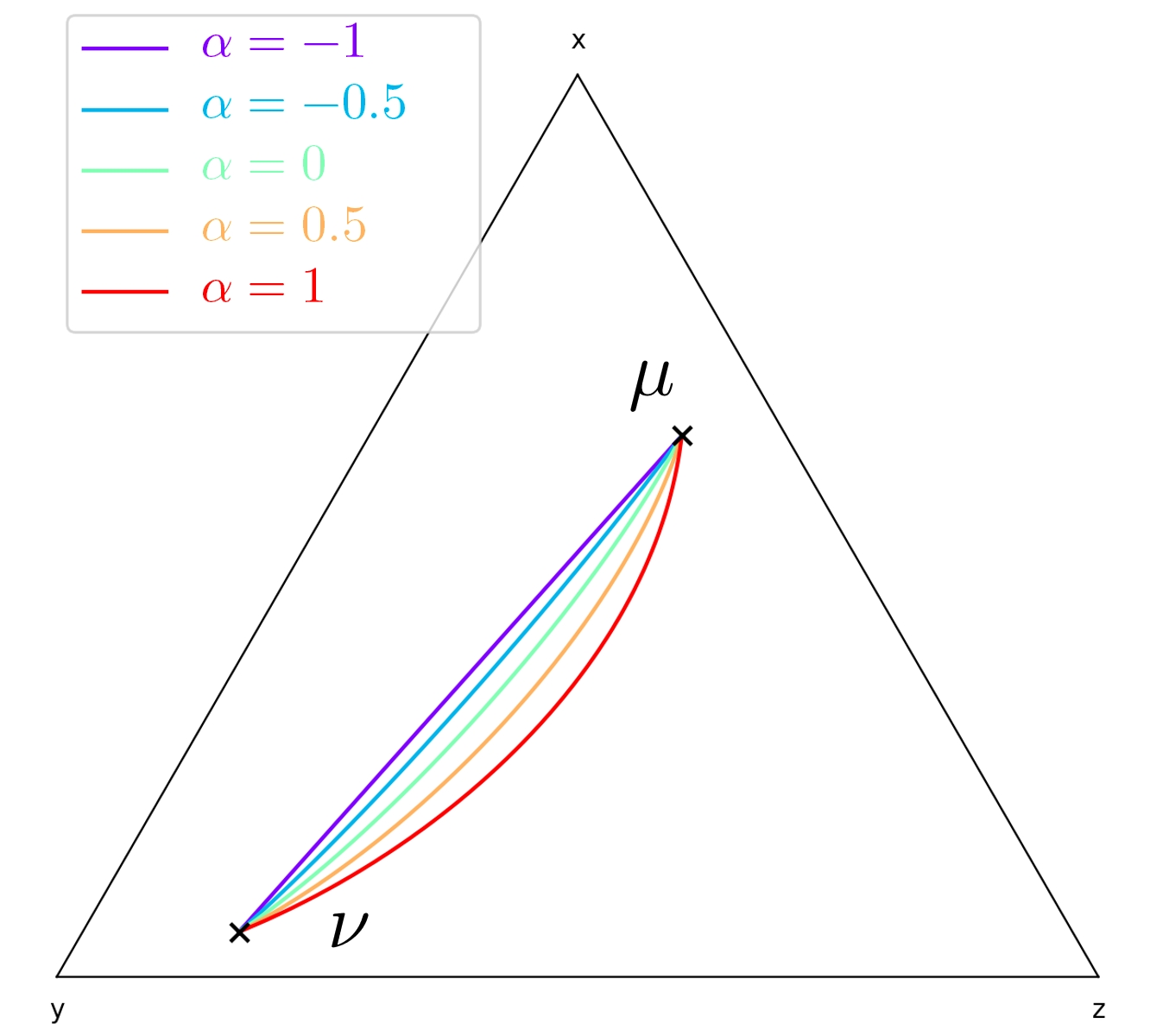}
    \caption{$\alpha$-geodesics defined by the exponential map (\textbf{left}) of the same base point and vector field, and the interpolation (\textbf{right}) between two fixed points on the 2-simplex.}
    \label{fig:geodesic}
    \vspace{-0.5em}
\end{figure}

In principle, a neural net can be directly trained to regress the vector field above. Nonetheless, the division by $\mu$ in the Riemannian norm may lead to numerical issues. Similar to previous work \citep{cheng2024categorical,davis2024fisher}, we can also parameterize the vector field on the mapped manifold of the $L_p$ sphere to avoid such issues. Consider the vector field mapping by taking the derivative with respect to $t$ on both sides of Eq.\ref{eqn:mapping}:
\begin{equation}
u_t:=\frac{\diff}{\diff t}x_t=
\begin{cases}
    \frac{\mu_t^{1/p}}{p\mu_t} a_t, &\alpha\ne 1\\
    \frac{1}{\mu_t}a_t, &\alpha=1    
\end{cases}
. \label{eqn:vf_mapping}
\end{equation}
The transformed vector field $u$ lies in the tangent space $T_x S_p$ of the $L_p$ sphere at $x={\pi}^{(\alpha)}(\mu)$. In this way, the exponential and logarithm maps have simplified forms as
\begin{align}
    &\widetilde{\exp}^{(\alpha)}_x(u)=\frac{x+\tau_{\mu,a}(t)u}{\|x+\tau_{\mu,a}(t)u\|_p},\label{eqn:mapped_exp} \\ 
    &\widetilde{\log}^{(\alpha)}_x(y)=\dot\tau_{\mu,\nu}(0)P^{(\alpha)}_\mu(y-x), \label{eqn:mapped_log}
\end{align}
where 
\begin{equation}
    P^{(\alpha)}_\mu(w)=w-\mu^{1/p}\sum_j\mu^{1-1/p}_j w_j=w-x\sum_j x^{p-1}_jw_j \label{eqn:proj_vf}
\end{equation}
denotes the orthogonal projection to the tangent space $T_x S_p$.
We note the $\alpha$-geometry of the $\alpha$-representations are \emph{almost-linear} in the exponential and logarithm maps in Eq.\ref{eqn:mapped_exp} and \ref{eqn:mapped_log}, only with an additional reparameterization of time and a projection to ensure normalization. 
This is, in fact, a direct result of normalizing the linear $\alpha$-geodesic for positive measures (see Appendix~\ref{supp:alpha_rep_m}).
The original Riemannian norm $\|a\|_g^2=\sum_{i}a_i^2/\mu_i$ can be substituted with 
\begin{equation}
    \|u\|_\alpha^2:=\|a\|_g^2=p^2\sum_{i=1}^n u_i^2\mu_i^{\alpha} . \label{eqn:mapped_norm}
\end{equation}
With this parameterization, similar to the original flow matching, $\alpha$-flow is expected to learn the marginalized linear difference of the $\alpha$-representations of $y-x$.
% Nonetheless, in the following discussion of the theoretical properties of $\alpha$-geodesic flow, we will stick with the original exponential and logarithm maps in Eq.\ref{eqn:ori_exp} and \ref{eqn:ori_log}, as the change of parameterization does not impact these theoretical results. 
We summarize the training of $\alpha$-flow in Algorithm~\ref{alg:train}.

\begin{algorithm}[ht]
\caption{Training $\alpha$-flow}\label{alg:train}
\begin{algorithmic}[1]
\WHILE{not converged}
    \STATE Sample $\mu_1\sim q(\mu),\mu_0\sim p_0(\mu)$.
    \STATE Apply $x_1=\pi^{(\alpha)}(\mu_1),x_0=\pi^{(\alpha)}(\mu_0)$ in Eq.\ref{eqn:mapping}.
    \STATE Sample $t\sim U[0,1]$ and calculate the mapped geodesic $x_t=\widetilde{\exp}_{x_0}^{(\alpha)}\big(t\widetilde{\log}_{x_0}^{(\alpha)}(x_1)\big)$ using Eq.\ref{eqn:mapped_exp}, \ref{eqn:mapped_log}.
    \STATE Calculate the conditional vector field $u_t=\widetilde{\log}_{x_t}^{(\alpha)}(x_1)/(1-t)$.
    \STATE Optimize the loss $\|v_\theta(x_t,t)-u_t\|_\alpha^2$ in Eq.\ref{eqn:mapped_norm}.
\ENDWHILE
\end{algorithmic}
\end{algorithm}

\subsection{Variational Bound for Negative Log-Likelihood}\label{sec:elbo}
The conditional flow matching framework enjoys the exact likelihood calculation. However, as discussed in previous work \citep{cheng2024categorical}, such a likelihood is \emph{continuous} and non-comparable to the \emph{discrete} categorical likelihood of $\mu$. In this subsection, we demonstrate that, similar to discrete-state DFM models, a variational bound can be established for the discrete negative log-likelihood (NLL) for all $\alpha$-flows.

\begin{theorem}[Negative ELBO]\label{thm:elbo}
For all $\alpha\in[-1,1]$, the loss $\mathcal{L}^{(\alpha)}$ establishes a negative evidence lower bound (ELBO) for the discrete negative log-likelihood:
\begin{equation}
    -\log p(\delta_1)\le \frac{1}{2}\mathbb{E}_{t,p_0(\mu)}\|v_\theta(\gamma^{(\alpha)}(t),t)-\dot{\gamma}^{(\alpha)}(t)\|_g^2+C
\end{equation}
where $\delta_1$ is the target one-hot distribution, $\gamma^{(\alpha)}$ is the $\alpha$-geodesic connecting $\mu_0$ to $\delta_1$, and $C$ is a non-negative constant that does not rely on the model parameter $\theta$.
\end{theorem}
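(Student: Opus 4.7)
The plan is to follow the endpoint-reparameterization + Jensen strategy that is standard for establishing ELBOs in (discrete) flow matching, adapted here to the $\alpha$-geometry. Introduce the implicit endpoint prediction $\hat\mu_1(x_t, t; \theta) := \widetilde{\exp}_{x_t}^{(\alpha)}\bigl((1-t)\,v_\theta(x_t, t)\bigr)$, under which $v_\theta$ is exactly the $\alpha$-geodesic velocity from $x_t$ aimed at $\hat\mu_1$, while the oracle $\dot\gamma^{(\alpha)}(t)$ is the same velocity aimed at the true target $\delta_1$. Interpret the discrete likelihood as the Bayesian mixture of the model's denoiser $p(\delta_1) = \mathbb{E}_{\mu_0 \sim p_0,\, t \sim U[0,1]}[\hat\mu_{1,c}(x_t, t; \theta)]$, where $c$ indexes the support of $\delta_1$---the CS-DFM analogue of the denoising ELBO used in \citet{cheng2024categorical}. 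Jensen's inequality then yields $-\log p(\delta_1) \le \mathbb{E}[-\log \hat\mu_{1,c}] = \mathbb{E}[D_{\mathrm{KL}}(\delta_1 \| \hat\mu_1)]$, reducing the theorem to dominating this expected KL by the flow matching loss.

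The technical core is to relate $\mathbb{E}_{\mu_0, t}[D_{\mathrm{KL}}(\delta_1 \| \hat\mu_1)]$ to $\tfrac{1}{2}\mathbb{E}\|v_\theta - \dot\gamma^{(\alpha)}\|_g^2$ modulo a $\theta$-independent remainder. Expanding the squared Riemannian residual via Eq.~\ref{eqn:mapped_log} and \ref{eqn:mapped_norm}, the loss reduces to a projected $L_p$-type discrepancy between $\hat\mu_1^{1/p}$ and $\delta_1^{1/p}$ with time-weighting $\dot\tau(0)/(1-t)$; integration in $t$ converts this local estimate into the $\alpha$-divergence $D^{(\alpha)}(\delta_1 \| \hat\mu_1) \propto \tfrac{4}{1-\alpha^2}\bigl(1 - \hat\mu_{1,c}^{(1+\alpha)/2}\bigr)$. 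The Bregman-type decomposition $\tfrac{1}{2}\mathbb{E}\|v_\theta - \dot\gamma^{(\alpha)}\|_g^2 = \tfrac{1}{2}\mathbb{E}\|v_\theta\|_g^2 - \mathbb{E}\langle v_\theta, \dot\gamma^{(\alpha)}\rangle_g + \tfrac{1}{2}\mathbb{E}\|\dot\gamma^{(\alpha)}\|_g^2$ then isolates the non-negative, $\theta$-independent kinetic energy of the oracle geodesic as the constant $C$.

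The main obstacle is that $D^{(\alpha)}(\delta_1 \| \hat\mu_1)$ is bounded by $4/(1-\alpha^2)$ while $D_{\mathrm{KL}}(\delta_1 \| \hat\mu_1) = -\log \hat\mu_{1,c}$ can be arbitrarily large, so no pointwise constant-multiple bound of KL by $\alpha$-divergence exists for $\alpha \in (-1, 1)$. The dominating mechanism must come from the $1/(1-t)^2$ weighting in the conditional vector field, which accumulates along the geodesic toward $\delta_1$, coupled with the denoiser's trajectory $\hat\mu_1^{(t)}$; I plan to handle this coupling by first establishing the bound for regularized targets $\delta_1^\epsilon := (1-\epsilon)\delta_1 + \epsilon\mathbf{1}/n \in \mathcal{P}_+$, so that all quantities stay in the interior of the simplex and the Fisher-Rao metric does not degenerate, and then passing $\epsilon \to 0$ via dominated convergence. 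A final subtlety concerns the endpoint cases $\alpha = \pm 1$: at $\alpha = 1$ the $\alpha$-representation degenerates to the logit (recovering softmax cross-entropy) while at $\alpha = -1$ it degenerates to the raw probability (recovering the Euclidean bound of \citet{li2024full,song2023equivariant}); both boundary cases should follow by continuity of the estimates obtained for the interior.
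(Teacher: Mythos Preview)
Your approach differs substantially from the paper's, and the route you have chosen has a gap you yourself flag but do not close.

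The paper's argument is far more direct. It invokes the standard continuous-Markov ELBO decomposition (reconstruction + diffusion + prior, as in \citet{lipman2024flow}): the reconstruction term vanishes because $\mu_1\equiv\delta_1$, the prior term is the $\theta$-independent constant $C$, and the diffusion term is $\int_0^1 \tfrac{1}{\diff t}\, D_\text{KL}\bigl(q(\delta_{t+\diff t}\mid\mu_t,\delta_1)\,\big\|\,p_\theta(\delta_{t+\diff t}\mid\mu_t)\bigr)\diff t$. The crucial point is that each KL compares two distributions separated by an \emph{infinitesimal} step from the same $\mu_t$, namely $\exp_{\mu_t}^{(\alpha)}\bigl(\dot\gamma^{(\alpha)}(t)\diff t\bigr)$ and $\exp_{\mu_t}^{(\alpha)}\bigl(v_\theta(\mu_t,t)\diff t\bigr)$. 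A first-order Taylor expansion of the exponential map combined with Proposition~\ref{prop:div_taylor} (the local identity $D_\text{KL}(\mu\|\mu+\diff\mu)=\tfrac12\|\diff\mu\|_g^2+o(\|\diff\mu\|_g^2)$) collapses each integrand to $\tfrac12\|\dot\gamma^{(\alpha)}(t)-v_\theta\|_g^2\diff t$, and integration in $t$ gives the bound. No global comparison between KL and $\alpha$-divergence is ever needed because the argument never leaves an infinitesimal neighborhood of $\mu_t$.

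Your route, by contrast, commits to controlling the \emph{global} quantity $D_\text{KL}(\delta_1\|\hat\mu_1)=-\log\hat\mu_{1,c}$ by the squared velocity residual. As you correctly note, $D^{(\alpha)}(\delta_1\|\hat\mu_1)\le 4/(1-\alpha^2)$ while the KL is unbounded, so no pointwise domination exists for $\alpha\in(-1,1)$. Your proposed fix---regularize $\delta_1\to\delta_1^\epsilon$, rely on the $1/(1-t)^2$ weight, then send $\epsilon\to 0$ via dominated convergence---is not worked out, and it is unclear what dominating function you would use when the very quantity you need to bound, $-\log\hat\mu_{1,c}$, can diverge as $\epsilon\to 0$ whenever the model misallocates mass. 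Separately, your definition $p(\delta_1)=\mathbb{E}_{\mu_0,t}[\hat\mu_{1,c}]$ with $t\sim U[0,1]$ is not how the flow-matching generative model assigns likelihood (which is the pushforward of $p_0$ under the learned flow evaluated at $t=1$, followed by categorical sampling), so even the Jensen step lacks a clean footing. The paper's infinitesimal decomposition sidesteps all of these difficulties at once.
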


The following proposition of the property of KL-divergence is necessary. See Appendix~\ref{supp:proof_div_taylor} for proof.
\begin{proposition}\label{prop:div_taylor}
For $\mu\in\mathcal{P}_+$,
\begin{equation}
D_\text{\rm KL}(\mu\|\mu+\mathrm{d}\mu)=\frac{1}{2}\|\mathrm{d}\mu\|_g^2 +o\left(\|\mathrm{d}\mu\|_g^2\right) .
\end{equation}
\end{proposition}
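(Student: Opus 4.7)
The plan is to expand the KL divergence termwise to second order in the perturbation and identify the resulting quadratic form with the Fisher norm. Writing $d\mu = \nu - \mu$ with $\nu \in \mathcal{P}_+$, note that the constraint $\sum_i \nu_i = \sum_i \mu_i = 1$ forces $\sum_i d\mu_i = 0$, so $d\mu$ genuinely lives in the tangent space $T_\mu\mathcal{P}$. This tangent-space condition is what makes the linear term of the expansion drop out.

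The first step is to set $\epsilon_i = d\mu_i/\mu_i$ and rewrite
\begin{equation}
D_\text{KL}(\mu\|\mu+d\mu) = -\sum_{i=1}^n \mu_i \log(1+\epsilon_i).
\end{equation}
Applying the scalar Taylor expansion $\log(1+x) = x - \tfrac{1}{2}x^2 + O(x^3)$ termwise, I would obtain
\begin{equation}
D_\text{KL}(\mu\|\mu+d\mu) = -\sum_i d\mu_i + \frac{1}{2}\sum_i \frac{(d\mu_i)^2}{\mu_i} + R(d\mu),
\end{equation}
where $R(d\mu) = O\!\left(\sum_i |d\mu_i|^3/\mu_i^2\right)$. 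The first sum vanishes by the tangent-space identity, and the second is exactly $\tfrac{1}{2}\|d\mu\|_g^2$ by the definition of the Fisher inner product in Eq.~\ref{eqn:inner}.

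The main obstacle, and the only point requiring real care, is showing $R(d\mu) = o(\|d\mu\|_g^2)$ rather than just $O(\|d\mu\|^3)$ in some other norm. I would handle this by factoring
\begin{equation}
\sum_i \frac{|d\mu_i|^3}{\mu_i^2} \;\le\; \Bigl(\max_j \frac{|d\mu_j|}{\mu_j}\Bigr) \sum_i \frac{(d\mu_i)^2}{\mu_i} \;=\; \Bigl(\max_j \frac{|d\mu_j|}{\mu_j}\Bigr) \|d\mu\|_g^2.
\end{equation}
Since $\mu \in \mathcal{P}_+$, the quantity $c := \min_j \mu_j$ is strictly positive, so $\max_j |d\mu_j|/\mu_j \le \|d\mu\|_\infty/c$. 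As $d\mu \to 0$ (equivalently, $\|d\mu\|_g \to 0$, since on a fixed positive simplex the Fisher norm is equivalent to the Euclidean norm), this prefactor vanishes, giving exactly $R(d\mu)/\|d\mu\|_g^2 \to 0$.

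To make the $O(x^3)$ bound in the Lagrange remainder rigorous, I would restrict attention to perturbations small enough that $|\epsilon_i| \le 1/2$ for all $i$ — guaranteed once $\|d\mu\|_\infty \le c/2$ — so that $|\log(1+\epsilon_i) - \epsilon_i + \epsilon_i^2/2| \le K|\epsilon_i|^3$ with a universal constant $K$. Combined with the preceding bound, this yields the stated $o(\|d\mu\|_g^2)$ remainder and completes the proof. No information-geometric machinery beyond the definition of the Fisher metric is required; the result is essentially the well-known statement that the Fisher metric is the Hessian of the KL divergence, localized to the simplex.
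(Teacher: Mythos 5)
Your proof is correct, and it is a bit more self-contained and more rigorous than the one in the paper. The paper first extends the KL divergence to the cone of positive measures $\mathcal{M}_+$ (Definition~\ref{def:alpha_div_m}, which adds the linear terms $\sum_i n_i - \sum_i m_i$) precisely so that the componentwise gradient $1 - m/n$ vanishes at $n=m$; it then reads off the Hessian $m/n^2\big|_{n=m} = 1/\mu$ and invokes the multivariate second-order Taylor theorem, leaving the $o(\|\mathrm{d}\mu\|_g^2)$ remainder unargued (and, as typeset, the final display drops the factor $\tfrac12$ that the proposition statement carries). You instead stay on the simplex, expand $-\sum_i \mu_i\log(1+\mathrm{d}\mu_i/\mu_i)$ termwise, and kill the linear term using only the tangent-space identity $\sum_i \mathrm{d}\mu_i = 0$ — a different mechanism for the same cancellation, and one that makes explicit why the statement is about perturbations within $\mathcal{P}$. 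Your factorization $\sum_i |\mathrm{d}\mu_i|^3/\mu_i^2 \le \big(\max_j |\mathrm{d}\mu_j|/\mu_j\big)\,\|\mathrm{d}\mu\|_g^2$, together with the uniform Lagrange bound on $|\epsilon_i|\le 1/2$, supplies the $o(\cdot)$ estimate that the paper only asserts; the trade-off is that the paper's Hessian route generalizes immediately to all $\alpha$-divergences (as its subsequent remark exploits), whereas your termwise expansion is specific to the logarithm and would need to be redone for general $\alpha$.
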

% This proposition established the connection between KL-divergence and the Fisher information metric, which can be directly verified by taking the partial derivatives with respect to probabilities. 

\begin{proof}[Proof for Theorem~\ref{thm:elbo}]
Following \citet{lipman2024flow}, a variational bound for the discrete log-likelihood for the continuous Markov process can be established as:
\begin{equation}
\begin{aligned}
    \mathcal{L} &=\mathbb{E}_{p_0(\mu)}\Big[-\log p_\theta(\delta_1|\mu_1)\\
    &+\int_0^1 \frac{1}{\mathrm{d}t}D_\text{KL}\big(q(\delta_{t+\mathrm{d} t}|\mu_t,\delta)\|p_\theta(\delta_{t+\mathrm{d}t}|\mu_t)\big)\\
    &+D_\text{KL}\left(q(\mu_0|\delta_1)\|p_\theta(\mu_0)\right)\Big] \ge -\log p(\delta_1),
\end{aligned}
\end{equation}
where $\delta_t\sim \mu_t$ is the intermediate one-hot distribution. Here, $q$ denotes the categorical distribution generated by the true denoising process, and $p_\theta$ denotes the predicted categorical distribution (in the infinitesimal sense). The three components are also known as the reconstruction, diffusion, and prior losses, respectively.
For $\alpha$-flow, we have $\mu_1\equiv\delta_1$ as $\tau_{\mu,\nu}(1)=1$ in the interpolation. Therefore, the reconstruction loss is always 0.
As $p_0(\mu)$ is independent of $\delta_1$, the prior loss is a non-negative constant $C$ that does not rely on the model parameter $\theta$.

Note that since the probability paths of $q$ are defined as the $\alpha$-geodesics, we have $q(\delta_{t+\mathrm{d}t}|\mu_t,\delta_1)=\gamma^{(\alpha)}(t+\mathrm{d}t)$. By definition, such a conditional probability changes at a rate of the ground truth vector field $\dot\gamma^{(\alpha)}(t)$.
For an infinitesimal timestep $\diff t$, the integrant of $D_\text{KL}$ can be approximated as:
\begin{align}
    &\quad\;D_\text{KL}\big(q(\delta_{t+\mathrm{d}t}|\mu_t,\delta)\|p_\theta(\delta_{t+\mathrm{d}t}|\mu_t)\big)\nonumber\\
    &= D_\text{KL}\left(\exp_{\mu_t}^{(\alpha)}\left(\dot{\gamma}^{(\alpha)}(t)\diff t\right)\|\exp_{\mu_t}^{(\alpha)}\left(v_\theta(\mu_t,t)\diff t\right)\right)\nonumber\\
    &\approx D_\text{KL}\left(\mu_t+\dot{\gamma}^{(\alpha)}(t)\diff t\|\mu_t+v_\theta(\mu_t,t)\diff t\right)\nonumber\\
    &\approx\frac{1}{2}\|\dot{\gamma}^{(\alpha)}(t)-v_\theta(\mu_t,t)\|_g^2\diff t^2.
\end{align}
We use the first-order Taylor expansion of the exponential map as $\exp^{(\alpha)}_\mu(a)\approx\mu+a$ for the first approximation and Proposition~\ref{prop:div_taylor} for the second. Therefore, integrating over $t$, we get
\begin{equation}
    \mathcal{L}=\frac{1}{2}\int_0^1 \|\dot{\gamma}^{(\alpha)}(t)-v_\theta(\mu_t,t)\|_g^2 \diff t+C=\frac{1}{2}\mathcal{L}^{(\alpha)}+C. 
\end{equation}
Marginalization over $\mu_0\sim p_0(\mu)$ concludes the proof.
\end{proof}

In this way, when optimizing the loss in Eq.\ref{eqn:loss}, it is equivalent to optimizing the negative ELBO for the discrete likelihood, as the gradient of the loss is the same.
% It is also worth noting that such a variational bound is not directly comparable to those in masked diffusion models whose prior loss is always 0 due to the deterministic start point from masks. However, it is still helpful as the loss gradient is essentially the same.

\subsection{Optimality of $\alpha$-Flow}
With the unified theoretical framework, we proceed to demonstrate the optimality of $\alpha$-flow from an optimization perspective.

\begin{definition}[$\alpha$-divergence] \label{def:alpha_div}
The following function $D^{(\alpha)}:\mathcal{P}_+\times \mathcal{P}_+\to\mathbb{R}_+$ is called the $\alpha$-\emph{divergence}:
\begin{equation}
    D^{(\alpha)}(\mu\|\nu):=\frac{4}{1-\alpha^2}\sum_{i=1}^n\nu_i^{\frac{1+\alpha}{2}}\mu_i^{\frac{1-\alpha}{2}} .
\end{equation}
\end{definition}
The $\alpha$-divergence can be naturally continued at $\alpha=\pm 1$ by taking the limit to obtain $D^{(-1)}(\mu\|\nu)=D_\text{KL}(\mu\|\nu)$ and $D^{(1)}(\mu\|\nu)=D_\text{KL}(\nu\|\mu)$. 
% A similar property holds for all $\alpha$-divergence:
% \begin{proposition}
% For $\mu\in\mathcal{P}_+$, for $\alpha\in[-1,1]$, we have
% \begin{equation}
%     D^{(\alpha)}(\mu\|\mu+\mathrm{d}\mu)=\frac{1}{2}\|\mathrm{d}\mu\|_g^2 +o(\|\mathrm{d}\mu\|_g^2) .
% \end{equation} 
% \end{proposition}
% This proposition is a direct generalization of KL-divergence, a special case of $\alpha=-1$. 

\begin{proposition}\label{prop:gradient}
The following holds:
\begin{equation}
    \log^{(\alpha)}_\mu \nu = \dot\tau_{\mu,\nu}(0)P_\mu\left(-\grad_\mu D^{(-\alpha)}(\cdot\|\nu)\right).
\end{equation} 
\end{proposition}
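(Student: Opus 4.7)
The plan is to reduce the claim to a direct calculation matching Eq.~\ref{eqn:ori_log} with the Fisher-natural gradient of $D^{(-\alpha)}(\cdot\|\nu)$. Writing $p=2/(1-\alpha)$, so that $1/p=(1-\alpha)/2$ and the exponent $(1+\alpha)/2$ on $\mu_i$ inside $D^{(-\alpha)}(\mu\|\nu)$ equals $1-1/p$, the entire argument is bookkeeping with these exponents together with careful sign-tracking.

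First, I would differentiate $D^{(-\alpha)}(\mu\|\nu)=\frac{4}{1-\alpha^2}\sum_i \nu_i^{1/p}\mu_i^{1-1/p}$ coordinatewise in $\mu_i$. The prefactor $\frac{4}{1-\alpha^2}\cdot\frac{1+\alpha}{2}$ collapses to $p$, giving $\partial_{\mu_i}D^{(-\alpha)}(\cdot\|\nu)\propto p(\nu_i/\mu_i)^{1/p}$. To convert this Euclidean gradient to the Fisher-Riemannian gradient, I would use the inner product in Eq.~\ref{eqn:inner} and solve $\langle \grad_\mu f,v\rangle_\mu = \sum_i v_i\partial_i f$ for all $v\in T_\mu\mathcal{P}$, subject to $\grad_\mu f\in T_\mu\mathcal{P}$. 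A short calculation (imposing $\sum_i(\grad_\mu f)_i=0$ to fix the Lagrange constant) yields the identity
\begin{equation*}
\grad_\mu f = P_\mu\bigl(\mu\cdot \partial f\bigr), \qquad (\mu\cdot\partial f)_i:=\mu_i\,\partial_i f,
\end{equation*}
where $P_\mu$ is the same tangent-space projection that already appears in Eq.~\ref{eqn:ori_log}. Substituting the coordinatewise derivative then gives $\grad_\mu D^{(-\alpha)}(\cdot\|\nu)= \pm p\,P_\mu\bigl(\mu(\nu/\mu)^{1/p}\bigr)$.

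Finally, since $P_\mu$ is linear and idempotent, $P_\mu(-\grad_\mu D^{(-\alpha)}(\cdot\|\nu)) = \mp p\,P_\mu\bigl(\mu(\nu/\mu)^{1/p}\bigr)$, so multiplying by $\dot\tau_{\mu,\nu}(0)$ reproduces Eq.~\ref{eqn:ori_log}, which is precisely $\log^{(\alpha)}_\mu(\nu)$. The main obstacle, and the only place the argument is not a one-line computation, is pinning down the sign convention: the paper's Definition~\ref{def:alpha_div} drops the usual leading $1-\sum_i(\cdots)$ of Amari's $\alpha$-divergence, so one must verify that the $-\grad$ in the proposition is taken with the sign that matches this convention and that the identity extends continuously to the limit cases $\alpha=\pm 1$ where $D^{(\alpha)}$ collapses to a KL-divergence. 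Conceptually, the resulting identity says that the $\alpha$-flow log map is, up to the reparameterization factor $\dot\tau_{\mu,\nu}(0)$, the Fisher-natural steepest-descent direction of $D^{(-\alpha)}(\cdot\|\nu)$, mirroring the familiar fact that the Levi-Civita log map of $\nu$ at $\mu$ is proportional to $-\grad_\mu \tfrac{1}{2}d(\mu,\nu)^2$.
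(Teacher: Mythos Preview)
Your approach is essentially the paper's: compute the coordinate derivative of $D^{(-\alpha)}(\cdot\|\nu)$, convert to the Fisher--Riemannian gradient via $\grad_\mu f = \mu\cdot\partial f$ (then project), and match against Eq.~\ref{eqn:ori_log}. The one substantive difference is exactly the sign issue you flagged. You differentiate Definition~\ref{def:alpha_div} as literally stated on $\mathcal{P}_+$, which drops the affine part of Amari's divergence and so yields $\grad_\mu D^{(-\alpha)}(\cdot\|\nu)=+p\,P_\mu(\mu(\nu/\mu)^{1/p})$ rather than its negative; the proposition then fails by a sign unless one reinstates the missing terms. The paper circumvents this by carrying out the differentiation on $\mathcal{M}_+$ using Definition~\ref{def:alpha_div_m}, whose additional linear terms $\frac{2}{1-\alpha}\sum m_i$ contribute the constant $p$ to the derivative and flip the sign to $\partial_{m}D^{(-\alpha)}(m\|n)=p\bigl(1-(n/m)^{1/p}\bigr)$; restricting to $\mathcal{P}_+$ and invoking the identity $pP_\mu(\mu w^{1/p})=P_\mu(\mu\hat\pi^{(\alpha)}(w))$ (Eq.~\ref{eqn:log_identity}) then matches Eq.~\ref{eqn:ori_log} with the correct sign. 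So your diagnosis is right, and the paper's route through $\mathcal{M}_+$ is precisely the mechanism that resolves it; once you add the affine terms (or equivalently adopt the $\mathcal{M}_+$ definition), your computation goes through verbatim, including at $\alpha=\pm1$ by continuity.
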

See Appendix~\ref{supp:proof_grad} for proof. In other words, the vector field direction along the $\alpha$-geodesic coincides with the projection of the negative gradient of the $(-\alpha)$-divergence, the steepest direction of decreasing $(-\alpha)$-divergence.
For general non-Levi-Civita connections, the local length-minimizing properties of the geodesics are no longer valid. However, a generalization of the statement is available:
\begin{theorem}[local optimality, \citet{bauer2024p}, Corollary 3.11]\label{thm:local_optim}
The $\alpha$-geodesic $\gamma^{(\alpha)}$ describes locally minimizing curves of the (local) $\alpha$-\emph{energy} defined as
\begin{equation}
    E^{(\alpha)}(\mu):=\frac{1}{p}\int_0^1 F^p(\gamma,\dot{\gamma}) \diff t \label{eqn:alpha_energy}
\end{equation}
for any locally defined smooth curve $\gamma$ starting at $\mu$, where $F(\gamma,\dot{\gamma})$ is the Finsler metric
\begin{equation}
    F(\mu,a):=\left(\sum_{i=1}^n\left|\frac{a_i}{\mu_i}\right|^p\mu_i\right)^{1/p} .
\end{equation}
\end{theorem}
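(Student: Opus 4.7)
The plan is to recast the $\alpha$-energy in the $\alpha$-representation coordinates, where the Finsler geometry becomes (up to normalization) the standard $L_p$ geometry of a Euclidean straight line, and then combine a convexity/Jensen argument with a constraint analysis.

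First, I would change coordinates via $x = \pi^{(\alpha)}(\mu) = \mu^{1/p}$. Using Eq.~\ref{eqn:vf_mapping} to substitute $a_i = p x_i^{p-1} u_i$ in the Finsler metric gives $|a_i/\mu_i|^p\mu_i = p^p |u_i|^p$, so $F^p(\mu,a) = p^p\|\dot x\|_p^p$ and
$$E^{(\alpha)}(\gamma) = p^{p-1}\int_0^1 \|\dot x(t)\|_p^p\diff t,$$
where $x(t)$ is constrained to the positive orthant of the unit $L_p$ sphere $S_p$. In these coordinates the $\alpha$-geodesic from Eq.~\ref{eqn:mapped_exp} becomes the normalized straight line $\big((1-\tau)x_0+\tau x_1\big)/\|(1-\tau)x_0+\tau x_1\|_p$ with reparameterization $\tau(t)$.

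Second, I would lift the problem to the ambient space $\mathbb{R}_+^n$ without the normalization constraint. For $p\ge 1$ the map $z\mapsto\|z\|_p^p$ is convex, so Jensen's inequality applied to $\dot y$ yields
$$\|y_1 - y_0\|_p^p = \left\|\int_0^1 \dot y(t)\diff t\right\|_p^p \le \int_0^1 \|\dot y(t)\|_p^p\diff t,$$
with equality iff $\dot y$ is constant, i.e.\ $y(t)$ is a straight line. Thus in the ambient space, linear interpolation uniquely minimizes the $L_p$ kinetic energy for fixed endpoints. To descend back to $S_p$, I would apply the standard Finsler-geometric principle: $\gamma^{(\alpha)}$ is a critical point of $E^{(\alpha)}$ under the constraint $\|x\|_p=1$ (the Lagrange multiplier is absorbed into the reparameterization $\tau_{\mu,\nu}$), and via the first-variation/exponential-map construction in Section~\ref{sec:alpha_geometry}, every nearby curve can be written as $\widetilde{\exp}^{(\alpha)}_{\gamma^{(\alpha)}(t)}(\xi(t))$ for a small variation field $\xi$, to which the ambient convexity inequality transfers through the tangential projection $P^{(\alpha)}_\mu$ in Eq.~\ref{eqn:proj_vf}.

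The main obstacle I anticipate is the limit case $\alpha=1$ where $p\to\infty$: the Finsler metric degenerates into a supremum norm and the $L_p$ sphere flattens into a cube, invalidating the convexity-plus-projection scheme. This regime requires a separate limiting argument built on the log-representation, as pointed to by Appendix~\ref{supp:alpha_limit}. A secondary subtlety is the behavior at the simplex boundary where some $\mu_i\to 0$: the Finsler metric blows up there, so the \emph{local} qualifier in the statement must be interpreted away from this stratum, which is precisely why the theorem is restricted to $\mathcal{P}_+$.
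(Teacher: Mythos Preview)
The paper does not supply its own proof of this theorem; it is quoted verbatim as Corollary~3.11 of \citet{bauer2024p} and used as a black box. So there is no in-paper argument to compare against, only the external reference.

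Your coordinate change is correct and is exactly the right starting point: after $x=\mu^{1/p}$ the $\alpha$-energy becomes $p^{p-1}\int_0^1\|\dot x\|_p^p\,\mathrm{d}t$ for curves on the positive orthant of the unit $L_p$ sphere, and the $\alpha$-geodesic is the normalized straight segment with the reparameterization $\tau$. Your Jensen/convexity step in the ambient $\mathbb{R}_+^n$ is also fine and shows that straight lines minimize the \emph{unconstrained} $L_p$ kinetic energy.

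The gap is Step~3, the ``descent to $S_p$.'' The sentence ``the ambient convexity inequality transfers through the tangential projection $P^{(\alpha)}_\mu$'' is not an argument. The unconstrained minimizer (a Euclidean segment) does not lie on $S_p$, and projecting it does not automatically produce the constrained minimizer; radial projection is not energy-nonincreasing for the $L_p$ kinetic energy in general, nor does convexity of $v\mapsto\|v\|_p^p$ survive composition with the nonlinear normalization map. Saying that ``the Lagrange multiplier is absorbed into $\tau_{\mu,\nu}$'' only gets you to \emph{criticality} (first variation zero), not to local \emph{minimality}. What is actually needed is either (i) the second-variation computation showing the index form is positive for short arcs, or (ii) the standard Finsler-geometric fact that for a strictly convex Lagrangian the spray geodesics are short-arc energy minimizers, together with the identification of that spray with the $\alpha$-connection on $\mathcal{P}_+$. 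That identification, and the ensuing local-minimality statement, is precisely what \citet{bauer2024p} establish; your outline replaces it with an appeal to projection that does not hold as stated.

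Your remarks on $\alpha=1$ and on the boundary $\partial\mathcal{P}_+$ are appropriate caveats but do not rescue the main step.
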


As a special case of the Levi-Civita connection when $\alpha=0,p=2$, the Finsler metric coincides with the Fisher information metric, and we obtain:
\begin{corollary}
The 0-geodesic locally minimizes the curve length and the kinetic energy $\frac{1}{2}\int_0^1\|\dot{\gamma}\|_g^2\diff t$.
\end{corollary}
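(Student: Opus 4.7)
The plan is to derive the corollary from Theorem~\ref{thm:local_optim} in two moves: first specialize the statement to $\alpha=0$ to get kinetic-energy minimization for free, then upgrade this to length minimization by the standard Cauchy--Schwarz plus arc-length reparameterization argument.

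First I would set $\alpha=0$, which forces $p=2/(1-\alpha)=2$. The Finsler metric in Theorem~\ref{thm:local_optim} then collapses to
\begin{equation*}
F^2(\mu,a)=\sum_{i=1}^n\left|\frac{a_i}{\mu_i}\right|^2\mu_i=\sum_{i=1}^n \frac{a_i^2}{\mu_i}=\|a\|_g^2,
\end{equation*}
which is exactly the squared Fisher information norm from Eq.~\ref{eqn:inner}. Substituting into Eq.~\ref{eqn:alpha_energy} identifies $E^{(0)}$ with the kinetic energy $\tfrac{1}{2}\int_0^1\|\dot{\gamma}\|_g^2\,\mathrm{d}t$, so Theorem~\ref{thm:local_optim} immediately yields that $\gamma^{(0)}$ locally minimizes the kinetic energy.

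For the length claim I would invoke two additional facts. (i) As noted in Section~\ref{sec:alpha_geometry} and Appendix~\ref{supp:alpha_conn}, the $0$-connection coincides with the Levi-Civita connection of the Fisher metric, so $\gamma^{(0)}$ has constant Riemannian speed and satisfies the equality $L(\gamma^{(0)})^2=2E^{(0)}(\gamma^{(0)})$. (ii) For any competing smooth curve $\eta$ with the same endpoints, reparameterizing by arc length produces $\tilde\eta$ with constant speed and the same length, so Cauchy--Schwarz gives $L(\eta)^2=L(\tilde\eta)^2=2E^{(0)}(\tilde\eta)$. Chaining the kinetic-energy minimization through these two identities yields $L(\gamma^{(0)})^2\le L(\eta)^2$, which is the desired length minimization.

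The main obstacle is the length part, since Theorem~\ref{thm:local_optim} by itself only delivers energy minimization, and length (unlike energy) is parameterization-invariant. The resolution relies on the constant-speed property of $\gamma^{(0)}$, which fails for general $\alpha$ but is automatic here because the Levi-Civita connection is metric-compatible. That single ingredient, outside the theorem statement but explicitly flagged earlier in the paper for $\alpha=0$, is what makes the upgrade from energy to length routine.
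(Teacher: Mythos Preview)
Your proposal is correct and matches the paper's approach: the paper simply observes that for $\alpha=0$, $p=2$ the Finsler metric coincides with the Fisher information metric, so $E^{(0)}$ is the kinetic energy and Theorem~\ref{thm:local_optim} gives the energy claim, with length minimization following because the $0$-connection is Levi-Civita. You spell out the energy-to-length step via the constant-speed/Cauchy--Schwarz argument in more detail than the paper does, but the underlying idea is identical.
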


Furthermore, as we confine ourselves to the positive orthant of the $L_p$ sphere, the geodesic is unique (\citet{bauer2024p}, Theorem 4.2), leading to global optimality:
% \begin{proposition}
% The $\alpha$-divergence is the \emph{canonical divergence} of the Riemannian structure $(g,\nabla^{(\alpha)})$, i.e., the following holds:
% \begin{equation}
%     D^{(\alpha)}(\mu\|\nu)=\int_0^1 t\|\dot{\gamma}_{\mu,\nu}(t)\|^2_g \diff t=\int_0^1 (1-t)\|\dot{\gamma}_{\nu,\mu}(t)\|^2_g \diff t .
% \end{equation} 
% \end{proposition}
\begin{theorem}[global optimality]\label{thm:global_optim}
The $\alpha$-geodesic $\gamma^{(\alpha)}$ describes curves that minimize the (global) $\alpha$-\emph{energy} $E^{(\alpha)}(\mu,\nu):=\frac{1}{p}\int_0^1 F^p(\gamma,\dot{\gamma}) \diff t$ among any smooth curve $\gamma$ connect $\mu$ to $\nu$.
\end{theorem}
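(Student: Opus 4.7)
The plan is to promote the local optimality statement of Theorem~\ref{thm:local_optim} to a global one, by exploiting the uniqueness of the $\alpha$-geodesic on the positive orthant of the $L_p$ sphere cited from \citet{bauer2024p}, Theorem 4.2. The overall strategy follows the standard local-to-global argument from Riemannian/Finsler variational calculus.

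First, I would convert $E^{(\alpha)}$-minimization into Finsler-length minimization. Applying H\"older's inequality in $t$ to $L(\gamma):=\int_0^1 F(\gamma,\dot\gamma)\diff t$ yields $L(\gamma)^p\le p\,E^{(\alpha)}(\gamma)$, with equality iff $F(\gamma,\dot\gamma)$ is constant in $t$. Since $L$ is reparametrization-invariant and any smooth curve can be reparametrized to constant Finsler speed, any global minimizer of $E^{(\alpha)}$ is a constant-speed global minimizer of $L$, and conversely. It therefore suffices to show that, after the reparametrization $\tau_{\mu,\nu}$ rendering it constant-speed, $\gamma^{(\alpha)}_{\mu,\nu}$ is the unique length-minimizer among smooth curves in $\mathcal{P}_+$ joining $\mu$ to $\nu$.

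Next, I would establish the existence of a length-minimizer $\gamma^*$ and then show it must coincide with $\gamma^{(\alpha)}_{\mu,\nu}$. The closed form of Eq.\ref{eqn:mapped_exp} supplies a concrete competitor of finite length, so the infimum is finite. For any sufficiently fine partition $0=t_0<t_1<\cdots<t_k=1$, each sub-arc $\gamma^*|_{[t_{i-1},t_i]}$ lies in a normal neighborhood of $\gamma^*(t_{i-1})$, and Theorem~\ref{thm:local_optim} forces it to coincide with the local $\alpha$-geodesic between its endpoints: otherwise, replacing the sub-arc with that local $\alpha$-geodesic would produce a strictly shorter competitor, contradicting the global minimality of $\gamma^*$. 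Hence $\gamma^*$ satisfies the $\alpha$-geodesic equation throughout $[0,1]$, and the uniqueness result \citep[Theorem 4.2]{bauer2024p} gives $\gamma^*=\gamma^{(\alpha)}_{\mu,\nu}$.

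The main obstacle I anticipate is the existence/compactness step: the Finsler coefficients of $F$ involve negative powers of $\mu_i$ and blow up as a path approaches $\partial\mathcal{P}_+$, so one must rule out minimizing sequences that drift to the boundary. The mitigating observation is that the $L_p$-sphere parametrization of Eq.\ref{eqn:mapped_exp} endows $\gamma^{(\alpha)}_{\mu,\nu}$ with an explicit finite length depending only on $\min_i\mu_i$ and $\min_i\nu_i$, so competing curves in a sub-level set of $L$ remain confined to a compact subset of $\mathcal{P}_+$; lower semicontinuity of $L$ under a weak Sobolev topology then delivers a minimizer, after which the local-to-global argument above concludes the proof.
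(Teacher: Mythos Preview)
Your proposal is correct and follows the same underlying logic as the paper: local optimality (Theorem~\ref{thm:local_optim}) together with the uniqueness of the $\alpha$-geodesic on the positive orthant \citep[Theorem 4.2]{bauer2024p} yields global optimality. The paper, however, does not spell out any of the variational machinery you supply; its entire argument is the single sentence preceding the theorem statement, treating global optimality as an immediate consequence of the cited uniqueness result.

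Your elaboration---the H\"older reduction from energy to length, the existence step via sub-level compactness, and the local-to-global patching of geodesic arcs---is the standard Finsler/Riemannian argument and is more rigorous than what the paper offers. The compactness step you flag as delicate is indeed the only place requiring care, and your observation that $F$ blows up like $\mu_i^{-(p-1)/p}$ near $\partial\mathcal{P}_+$ for $p>1$ is the right ingredient; the paper sidesteps this entirely by leaning on \citet{bauer2024p}. In short: same route, but you have filled in what the paper left implicit.
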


\begin{corollary}
The 0-geodesic globally minimizes the curve length and the kinetic energy $\frac{1}{2}\int_0^1\|\dot{\gamma}\|_g^2\diff t$.
\end{corollary}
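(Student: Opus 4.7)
The plan is to derive this Corollary as a direct specialization of Theorem~\ref{thm:global_optim}, together with the standard Cauchy--Schwarz argument that relates kinetic energy to length on a Riemannian manifold.

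First, I would specialize the parameters. For $\alpha=0$ we have $p=2/(1-\alpha)=2$, so the Finsler metric of Theorem~\ref{thm:local_optim} simplifies as
\begin{equation*}
    F(\mu,a)=\left(\sum_{i=1}^n\left|\frac{a_i}{\mu_i}\right|^2\mu_i\right)^{1/2}=\left(\sum_{i=1}^n\frac{a_i^2}{\mu_i}\right)^{1/2}=\|a\|_g,
\end{equation*}
recovering precisely the Fisher information norm of Eq.~\ref{eqn:inner}. Consequently, the global $\alpha$-energy of Theorem~\ref{thm:global_optim} becomes $E^{(0)}(\mu,\nu)=\tfrac{1}{2}\int_0^1\|\dot{\gamma}\|_g^2\,\mathrm{d}t$, which is exactly the kinetic energy. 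Applying Theorem~\ref{thm:global_optim} at $\alpha=0$ therefore yields that $\gamma^{(0)}$ globally minimizes the kinetic energy among all smooth curves connecting $\mu$ to $\nu$.

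For the length statement, I would invoke Cauchy--Schwarz: for any smooth curve $\gamma:[0,1]\to\mathcal{P}_+$ connecting $\mu$ to $\nu$,
\begin{equation*}
    L(\gamma)^2=\left(\int_0^1\|\dot{\gamma}\|_g\,\mathrm{d}t\right)^2\le \int_0^1\|\dot{\gamma}\|_g^2\,\mathrm{d}t=2E^{(0)}(\gamma),
\end{equation*}
with equality iff $\|\dot{\gamma}\|_g$ is constant in $t$. Since the Levi-Civita connection at $\alpha=0$ is metric-compatible, the 0-geodesic is parameterized at constant Fisher speed, so the inequality is saturated for $\gamma^{(0)}$. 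Combined with the fact that $\gamma^{(0)}$ minimizes $E^{(0)}$, this forces $\gamma^{(0)}$ to also minimize $L$ globally among smooth curves connecting $\mu$ to $\nu$.

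The main obstacle, and the only nontrivial step, is verifying the constant-speed property of $\gamma^{(0)}$ on this particular statistical manifold; this is standard for Levi-Civita geodesics, but for completeness I would check it explicitly using the closed-form expressions in Eq.~\ref{eqn:mapped_exp}--\ref{eqn:mapped_log} at $p=2$, where the reparameterization $\tau_{\mu,\nu}(t)$ reduces to the constant-speed parameterization of the arc on the $L_2$-sphere $S_2$. Everything else is a direct plug-in into Theorem~\ref{thm:global_optim} and Cauchy--Schwarz.
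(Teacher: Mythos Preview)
Your proposal is correct and follows essentially the same route as the paper: the corollary is obtained by specializing Theorem~\ref{thm:global_optim} at $\alpha=0$, $p=2$, where the Finsler metric collapses to the Fisher--Rao norm and the $\alpha$-energy becomes the kinetic energy. The paper leaves the length claim implicit, whereas you spell out the standard Cauchy--Schwarz/constant-speed argument; that is a reasonable elaboration rather than a different approach.
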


As the Fisher information metric coincides with the standard sphere metric for $\alpha=0$, the mapped geodesics are great circles on the sphere, leading to the following corollaries:
\begin{corollary}\label{coro:sphere_geodesic}
For the vector field $u:=a/\sqrt{\mu}$ on the sphere, the curve that minimizes the kinetic energy $\frac{1}{2}\int_0^1 \|u_t\|^2_2\diff t=\frac{1}{2}\int_0^1 \|a_t\|^2_g\diff t$ between two points $x=\sqrt{\mu},y=\sqrt{\nu}$ is the great circle connecting $x$ and $y$.
\end{corollary}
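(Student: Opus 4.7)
The plan is to derive this corollary as a direct specialization of the preceding global optimality result (Theorem~\ref{thm:global_optim}) to $\alpha=0$, and then transport the conclusion from the statistical manifold $\mathcal{P}_+$ to its image on the unit $L_2$-sphere under the square-root embedding $\pi^{(0)}\colon\mu\mapsto\sqrt{\mu}$. The two things to match up are the energy functionals on the two sides of the embedding, and the identification of the minimizer's image as a great circle.

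First, I would set $\alpha=0$, so $p=2$. Plugging $p=2$ into the Finsler metric in Theorem~\ref{thm:local_optim} yields $F(\mu,a)=(\sum_i a_i^2/\mu_i)^{1/2}=\|a\|_g$, so the $0$-energy of Theorem~\ref{thm:global_optim} reduces to the Fisher kinetic energy $\tfrac{1}{2}\int_0^1 \|a_t\|_g^2\,\diff t$. The corollary immediately following Theorem~\ref{thm:global_optim} then asserts that the $0$-geodesic globally minimizes this energy over all smooth curves in $\mathcal{P}_+$ connecting $\mu$ to $\nu$. Next, I would verify the pointwise identity $\|u\|_2^2=\sum_i a_i^2/\mu_i=\|a\|_g^2$ for $u=a/\sqrt{\mu}$ by direct substitution, which shows that the sphere-side kinetic energy $\tfrac{1}{2}\int_0^1 \|u_t\|_2^2\diff t$ and the Fisher-side kinetic energy $\tfrac{1}{2}\int_0^1 \|a_t\|_g^2\diff t$ coincide curve by curve. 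Consequently, the image under $\pi^{(0)}$ of the minimizing $0$-geodesic is the unique curve on the sphere that minimizes the stated energy between $x$ and $y$.

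Finally, I would identify this image as the great circle through $x$ and $y$. From Eq.~\ref{eqn:mapped_exp} at $p=2$, the mapped $0$-geodesic has the form $x_t=(x_0+\tau(t)u_0)/\|x_0+\tau(t)u_0\|_2$, i.e.\ the radial projection onto $S^{n-1}$ of a straight segment lying in the 2-plane spanned by $x_0$ and $u_0$; such a projection traces the great circle in that plane, and in particular gives the great arc from $x=\sqrt{\mu}$ to $y=\sqrt{\nu}$. The only nontrivial ingredient is this last identification, namely that the normalized linear interpolation on the sphere coincides with its geodesic, but this is the classical fact that $\pi^{(0)}$ is an isometry from the Fisher--Rao manifold to the positive orthant of the round sphere, so I anticipate no real obstacle beyond citing or briefly justifying it.
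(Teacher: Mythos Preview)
Your proposal is correct and follows essentially the same approach as the paper: specialize Theorem~\ref{thm:global_optim} to $\alpha=0$, use the isometry of $\pi^{(0)}$ between the Fisher--Rao simplex and the positive spherical orthant to transfer both the energy and the minimizer, and identify the mapped $0$-geodesic as a great circle. The paper states this in one sentence (``the Fisher information metric coincides with the standard sphere metric for $\alpha=0$, the mapped geodesics are great circles''); you spell out the energy identity and the normalized-linear-segment description of the image, but the logical skeleton is identical.
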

\begin{corollary}
For two one-hot distributions $\delta_0,\delta_1$, the interpolation along the kinetic-energy minimizing curve adopts the following form:
\begin{equation}
    \mu_t=\delta_0\cos^2(t\pi/2)+\delta_1\sin^2(t\pi/2) .
\end{equation}
\end{corollary}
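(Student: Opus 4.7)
The plan is to reduce the corollary to Corollary~\ref{coro:sphere_geodesic} by passing to the $\alpha=0$ representation on the unit $L_2$ sphere and then exploiting the fact that the sphere images of two distinct one-hot distributions are orthogonal unit vectors.

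First, I would observe that the $\alpha=0$ representation sends $\delta_i \mapsto x_i := \sqrt{\delta_i}$, which, because $\sqrt{0}=0$ and $\sqrt{1}=1$, is just $\delta_i$ itself viewed as an element of the unit $L_2$ sphere $S_2$. Assuming $\delta_0 \neq \delta_1$ (the equal case is trivial), their supports are disjoint, so as vectors they satisfy $\langle x_0, x_1\rangle = 0$, i.e.\ they are orthogonal unit vectors in $\mathbb{R}^n$. Boundary points of $\mathcal{P}_+$ are handled here by continuity of the sphere embedding, or equivalently by taking a limit of positive smoothings and invoking continuity of the great-circle construction.

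Next, by Corollary~\ref{coro:sphere_geodesic}, the kinetic-energy minimizing curve connecting $x_0$ to $x_1$ is the great circle between them. For two orthogonal unit vectors, the unique (up to reparameterization) constant-speed great circle is
\begin{equation}
x_t = \cos\!\left(\tfrac{t\pi}{2}\right)\delta_0 + \sin\!\left(\tfrac{t\pi}{2}\right)\delta_1,
\end{equation}
since the subtended angle is $\pi/2$; a direct check verifies $\|x_t\|_2 = 1$, $x_0=\delta_0$, $x_1=\delta_1$, and constant speed $\pi/2$.

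Finally, I would invert the $\alpha=0$ representation by squaring componentwise, $\mu_t = x_t^{\odot 2}$. Expanding,
\begin{equation}
\mu_t = \cos^2\!\left(\tfrac{t\pi}{2}\right)\delta_0 + \sin^2\!\left(\tfrac{t\pi}{2}\right)\delta_1 + 2\cos\!\left(\tfrac{t\pi}{2}\right)\sin\!\left(\tfrac{t\pi}{2}\right)\,(\delta_0 \odot \delta_1),
\end{equation}
and the cross term vanishes because $\delta_0$ and $\delta_1$ are supported on distinct coordinates, yielding the claimed form. I do not expect a serious obstacle: the only mildly delicate point is that the optimality results were stated for $\mathcal{P}_+$ while one-hot distributions sit on the boundary, but the $\alpha=0$ sphere embedding extends continuously to the boundary and the great-circle parameterization achieves the infimum kinetic energy by continuity.
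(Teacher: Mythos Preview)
Your proposal is correct and follows exactly the route the paper intends: the corollary is stated immediately after Corollary~\ref{coro:sphere_geodesic} without an explicit proof, and your argument---mapping the one-hot distributions to orthogonal unit vectors on $S_2$ via $\pi^{(0)}$, parameterizing the great circle as $x_t=\cos(t\pi/2)\delta_0+\sin(t\pi/2)\delta_1$, and squaring back---is precisely the intended derivation. Your remark about handling the boundary by continuity is a reasonable extra care that the paper does not spell out.
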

The above corollaries coincide with the core results in \citet{shaul2024flow} and justify the usage of the cosine-squared scheduler $\kappa(t)=\cos^2(t\pi/2)$ in \citet{han2022ssd}.

\subsection{Connection to Existing DFM Models}\label{sec:classification}
\begin{table*}[htb]
\centering
\caption{Classes of $\alpha$-geometry and their connections to existing DFM methods. Though operating on the corresponding $\alpha$-representation, many of the existing works assumed Euclidean geometry, which could fail to capture the underlying $\alpha$-geometry.}\label{tab:class}
\vspace{-0.5em}
\small
\setlength{\abovedisplayskip}{2pt} % Reduce space above
\setlength{\belowdisplayskip}{2pt} % Reduce space below
\setlength{\jot}{0pt}
\resizebox{\linewidth}{!}{
\begin{tabular}{@{}lcccc@{}}
\toprule
Class & $\alpha$-Representation & $\alpha$-Geometry & CS-DFM & DS-DFM \\ \midrule
\begin{tabular}[c]{@{}l@{}}Mixture\\ ($\alpha=-1$)\end{tabular} & $x=\mu$ & 
\parbox{5cm}{
\begin{align*}
\widetilde{\exp}^{(-1)}_x u&=x+u\\
\widetilde{\log}^{(-1)}_x y&=y-x
\end{align*}
}
& \begin{tabular}[c]{@{}c@{}}LinearFM [\citenum{lipman2022flow,stark2024dirichlet}]\end{tabular}
& \begin{tabular}[c]{@{}c@{}}MDLM [\citenum{sahoo2024simple}]\\ MD4 [\citenum{shi2024simplified}] \\ DFM [\citenum{gat2024discrete}]\end{tabular} \\
\begin{tabular}[c]{@{}l@{}}Metric\\ ($\alpha=0$)\end{tabular} & $x=\sqrt\mu$ & 
\parbox{5cm}{
\begin{align*}
\widetilde{\exp}^{(0)}_x u&=x\cos\|u\|_2+\frac{u}{\|u\|_2}\sin\|u\|_2\\
\widetilde{\log}^{(0)}_x y&=\frac{\arccos(\langle x,y\rangle)}{\sqrt{1-\langle x,y\rangle^2}}(y-x-\langle x,y-x\rangle x)
\end{align*}
}
& \begin{tabular}[c]{@{}c@{}}SFM [\citenum{cheng2024categorical}]\\ FisherFlow [\citenum{davis2024fisher}]\end{tabular}
& \begin{tabular}[c]{@{}c@{}}DFM-KO [\citenum{shaul2024flow}] \\ $\cos^2$ scheduler [\citenum{han2022ssd}]\end{tabular} \\
\begin{tabular}[c]{@{}l@{}}Exponential\\ ($\alpha=1$)\end{tabular} & $x=\log \mu$ &
\parbox{5cm}{
\begin{align*}
\widetilde{\exp}^{(1)}_x u&=x+u-\logsumexp(x+u) \\
\widetilde{\log}^{(1)}_x y&=y-x+D_\text{KL}(\mu\|\nu)
\end{align*}
}
& \begin{tabular}[c]{@{}c@{}}AssignmentFlow [\citenum{boll2024generative,boll2024generative2}]\\ ArgmaxFlow [\citenum{hoogeboom2021argmax}] \\ TESS [\citenum{mahabadi2023tess}]\end{tabular}
& None \\ \bottomrule
\end{tabular}
}
\end{table*}
In this subsection, we systematically discuss how existing discrete flow matching models are connected to our unified framework and how they may assume different geometries from our $\alpha$-flow.
Following the conventions in information geometry, the ($-1$)-geometry is called the $m$-geometry ($m$ for mixture), and the 1-geometry is called the $e$-geometry ($e$ for exponential). 
% Nonetheless, the solutions for $\alpha=\pm 1,0$ are known and have been explored in previous CS-DFM models. 
Despite our focus on continuous-state DFM, we note the potential of applying the concept of $\alpha$-geometry in discrete-state DFM (DS-DFM) models, for which a probability path of categorical distributions must also be defined.
We summarize the different classes of $\alpha$-geometries together with each related existing work in Table~\ref{tab:class}.

\textbf{Mixture class ($\alpha=-1$).} 
The $m$-geometry on the statistical manifold admits the identity embedding $\mu\mapsto \mu$ with a flat structure. The exponential and logarithm maps share the same formulae as the common Euclidean geometry.
We generally consider any model that operates linearly on the categorical probabilities (i.e., $m$-representations) as \emph{mixture-class}. This includes linear flow matching used in previous work \citep{stark2024dirichlet,cheng2024categorical}. The only difference is that the loss for linear flow matching uses the Euclidean norm $\|\cdot\|_2^2$ instead of the Riemannian norm $\|\cdot\|_g^2$. \citet{song2023equivariant,dunn2024mixed} also used such linear models to generate discrete modality.
Furthermore, most recent work of DS-DFM models \citep{sahoo2024simple,shi2024simplified,gat2024discrete} primarily adopted the $m$-geodesic as the interpolation path between the mask and target tokens and can also be classified as the mixture class.

\textbf{Metric class ($\alpha=0$).} 
The mapping $\pi^{(0)}:\mu\mapsto \sqrt{\mu}$ projects categorical distributions onto the unit sphere whose induced metric coincides with the inherited Euclidean metric (the canonical spherical geometry) up to a constant scaling factor. Therefore, in this case, the $\alpha$-geodesic is the Levi-Civita geodesic that enjoys additional theoretical benefits. With the spherical symmetry, the reparameterizations can be solved in closed form (see Appendix~\ref{supp:solve_geodesic}), leading to the canonical spherical geometry with the (mapped) $0$-geodesics being great circles on the sphere.

Two previous works \citep{cheng2024categorical,davis2024fisher} have explored such a canonical Riemannian structure on the statistical manifold of categorical distributions, following the spherical geodesics between the mapped representations of categorical distributions. Both papers also demonstrated the close relation of such a metric-class generative model to \emph{natural gradient} \citep{amari1998natural,amari1998adaptive} as the ``steepest'' direction of decreasing KL-divergence. As noted in Proposition~\ref{prop:gradient}, a similar property holds for all variants of $\alpha$-flow.
Additionally, in DS-DFM models, \citet{shaul2024flow} adopted a kinetic-optimal perspective to construct the path of categorical distributions. When using the canonical Fisher information metric, \citet{shaul2024flow} arrived at exactly the same result as our Corollary \ref{coro:sphere_geodesic}, which is a special case of Theorem \ref{thm:local_optim} and \ref{thm:global_optim} establishing optimality for $\alpha$-flow.
The usage of the corresponding scheduler (the cosine-squared scheduler) was proposed in \citet{han2022ssd} purely heuristically. Our results provide additional theoretical justifications.

\textbf{Exponential class ($\alpha=1$).} 
The $e$-geometry on the statistical manifold admits an almost linear structure on the logit space with the logit mapping $\mu\mapsto\log\mu$ with an almost-linear exponential and logarithm maps shown in Table~\ref{tab:class}.
To the best of our knowledge, such exponential and logarithm maps have not been proposed in previous work. \citet{boll2024generative,boll2024generative2} explored a similar setting along the $e$-geodesic of the assignment manifold but used a different parameterization.
Similar to the previous two classes, we generally consider any models operating on the logits (i.e., the $e$-representation up to the equivalence in the affine structure) as \emph{exponential-class}. In this sense, the earliest attempts at extending flow matching and diffusion models to discrete generative modeling can be considered. For example, \citet{hoogeboom2021argmax} assumes multiplicative noising of probabilities, therefore linearly operating in the logits space. \citet{mahabadi2023tess} performed diffusion on the logit space, and \citet{li2024full} adopted the linear flow matching on the logits for the discrete modality. % We called these models log-linear flow matching, as they operate linearly on the logits.
These existing models usually directly assume an Euclidean structure on the logits without the additional correction term. Our exponential-class model, on the other hand, is the rigorous mathematical result from information geometry.

\section{Experimental Result}
To thoroughly study the different initializations of our proposed $\alpha$-flow, we systematically conduct evaluations across various discrete generative tasks in the domains of computer vision, natural language processing, and bioinformatics.
In addition to the three existing classes of $\alpha$-flow that enjoy closed-form solutions of the geodesic reparameterization ($\boldsymbol{\alpha=\pm 1,0}$), we also numerically solved for $\boldsymbol{\alpha=\pm 0.5}$ (see Appendix~\ref{supp:solve_geodesic}) to provide finer-grained tuning results on $\alpha$.
\textbf{Linear} represents the linear flow matching model often used as a baseline in previous work.
Additional baselines of discrete-state DFM models are also compared. We include \textbf{MDLM} \citep{sahoo2024simple} and \textbf{DFM} \citep{gat2024discrete} as state-of-the-art DS-DFM models for comparison. For MDLM and DFM, the final predictions are directly interpreted as logits to stochastically unmask each token. 

\subsection{Toy Example: Swiss Roll}

Following \citet{cheng2024categorical}, the toy example of the Swiss roll on the 2-simplex serves as the proof-of-concept of the effectiveness of $\alpha$-flow. We sampled 10k points as the training set and generated 10k points from each trained model using 1000 Euler steps. We used Gaussian kernel density estimation (KDE) to evaluate the generation quality to approximate the density and calculate the KL divergence between the training data and the generation. 
As the outputs for DS-DFM models are discrete tokens, they cannot be applied to this task. The sample KL divergences and estimated kernel densities are shown in Figure~\ref{fig:swissroll}. All CS-DFM models could capture this complex geometry on the simplex well, demonstrating $\alpha$-flow as a continuous family of CS-DFM models.
% The KL divergence indicates a worse performance for $\alpha=-1$, probably due to numerical issues of directly optimizing the Riemannian norm.

% \input{tabs/swissroll}
\begin{figure}[htb]
    \centering
    \includegraphics[width=.8\linewidth]{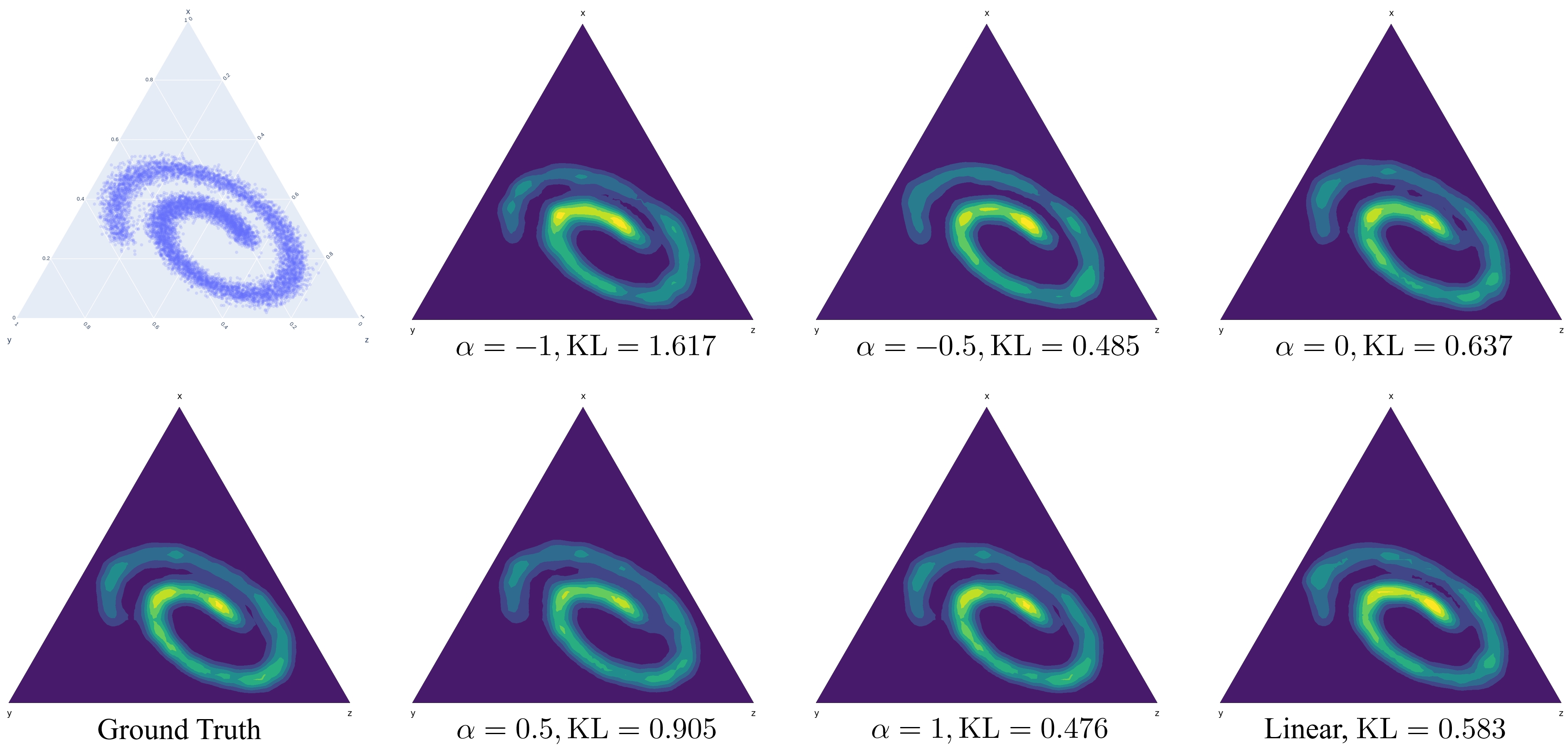}
    \vspace{-.5em}
    \caption{Estimated densities using different variants of $\alpha$-flow and KL divergence to the ground truth density estimation.}
    \label{fig:swissroll}
\end{figure}

\subsection{Image Generation: Binarized MNIST}
The binarized MNIST dataset \citep{salakhutdinov2008quantitative} is the binarized version of MNIST \citep{lecun2010mnist} by thresholding the original values to be either 0 or 1, leading to a 2-class discrete generative modeling task with a data dimension of $28^2=784$. All models use the same convolutional neural network (CNN) based predictor adapted from \citet{song2020improved} with additional sinusoidal time embeddings.

\begin{figure}[ht]
    \centering
    \includegraphics[width=\linewidth]{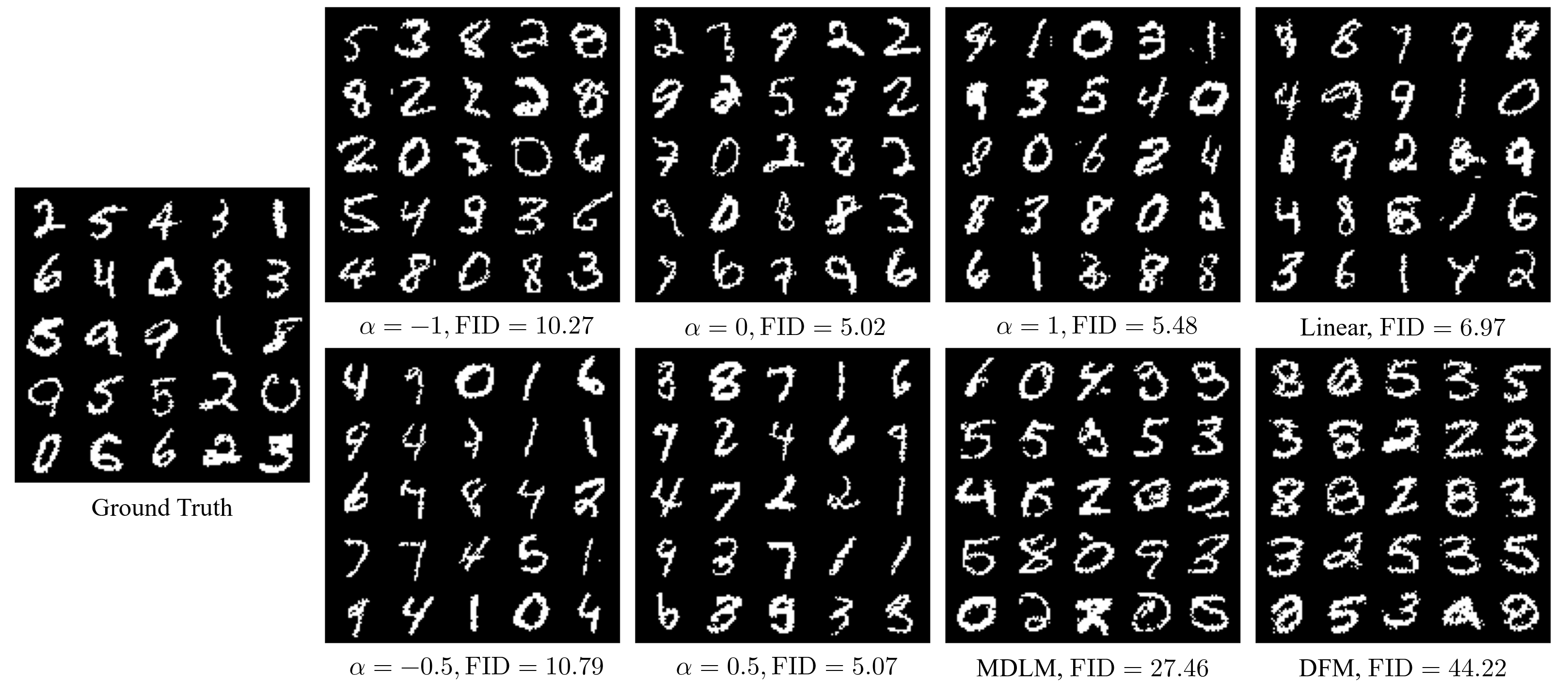}
    \vspace{-1em}
    \caption{Uncurated generated digits and FID scores (lower is better) on the binarized MNIST dataset.}
    \label{fig:bmnist}
    \vspace{-1em}
\end{figure}

To evaluate the generation quality quantitatively, we follow \citet{cheng2024categorical} to calculate the Fréchet Inception distance (FID) to reflect the distributional fitness. For each model, 1000 images were sampled using 300 Euler steps.
Figure~\ref{fig:bmnist} provides the FID scores with uncurated generated samples for different models, where all CS-DFM models consistently and significantly outperformed DS-DFM models. Among the $\alpha$-flow variants, the mixture class ($\alpha=-1$ and Linear) falls short compared to the metric and exponential classes, supporting the curved geometry of the statistical manifold. It is worth noting that $\alpha=0.5$ achieved almost the same best performance as $\alpha=0$, demonstrating the effectiveness of intermediate $\alpha$ values that have not been explored in previous work.

\subsection{Language Modeling: Text8}
The Text8 dataset \citep{mahoney2011large} is a medium-sized character-level corpus with a total number of 100M characters and a small vocabulary of 27. We follow previous work \citep{austin2021structured,campbell2024generative} to use a split of 90M/5M/5M and random chunks of length 256 without any preprocessing. Following \citet{cheng2024categorical}, we used a 12-layer diffusion transformer (DiT) \citep{peebles2023scalable} based predictor for all models and trained each model with a total batch size of 1M characters and a total number of 500B characters. 
Following \citet{campbell2024generative}, we utilize the pre-trained GPT-J-6B \citep{gpt-j} model to calculate the NLL of the generations as a model-agnostic evaluation metric for both CS- and DS-DFM models. Entropy on the tokenized generations is also calculated to reflect the resemblance to the ground truth data. The NLLs and entropies were calculated over a total number of 1M characters for each trained model using 256 Euler steps. The results are summarized in Table \ref{tab:text8}. 
% Additional DS-DFM models of D3PM \citep{austin2021structured}, SEDD \citep{lou2023discrete}, and MultiFlow \citep{campbell2024generative} are also compared as a reference.

\begin{table*}[htb]
\centering
\begin{minipage}{.58\textwidth}
\centering
\vspace{-0.5em}
\caption{GPT-J-6B evaluated NLLs and generation entropy on Text8. The best is in \textbf{bold}, and the second best is \underline{underlined}. \textsuperscript{*} indicates results from \citet{campbell2024generative}.}\label{tab:text8}
\small
\resizebox{\linewidth}{!}{
\begin{tabular}{@{}llccc@{}}
\toprule
Model Type & Model & NLL & Entropy & Entropy Diff \\ \midrule
\multirow{6}{*}{CS-DFM} & $\alpha=-1$ & 7.31 & 7.45 & \textbf{-0.03} \\
 & $\alpha=-0.5$ & 7.14 & 7.37 & -0.11 \\
 & $\alpha=0$ & 6.85 & 7.38 & -0.10 \\
 & $\alpha=0.5$ & 7.00 & 7.42 & {\ul -0.06} \\
 & $\alpha=1$ & 7.09 & 7.51 & \textbf{+0.03} \\
 & Linear & 7.35 & 7.62 & +0.14 \\ \midrule
 % & LogLinear & 6.81 & 7.34 & -0.14 \\ \midrule
\multirow{5}{*}{DS-DFM} & MDLM & 6.76 & 7.55 & +0.07 \\
 & DFM & 6.78 & 7.58 & +0.10 \\
 & D3PM[\citenum{austin2021structured}]\textsuperscript{*} & 6.93 & 7.38 & -0.10 \\
 & SEDD[\citenum{lou2023discrete}]\textsuperscript{*}& \textbf{6.49} & 7.17 & -0.31 \\
 & MultiFlow[\citenum{campbell2024generative}]\textsuperscript{*} & {\ul 6.73} & 7.39 & -0.09 \\ \midrule
 & Data\textsuperscript{*} & 4.10 & 7.48 & 0 \\ \bottomrule
\end{tabular}
}
\end{minipage}%
\hfill
\begin{minipage}{0.41\textwidth}
\centering
\includegraphics[width=\linewidth]{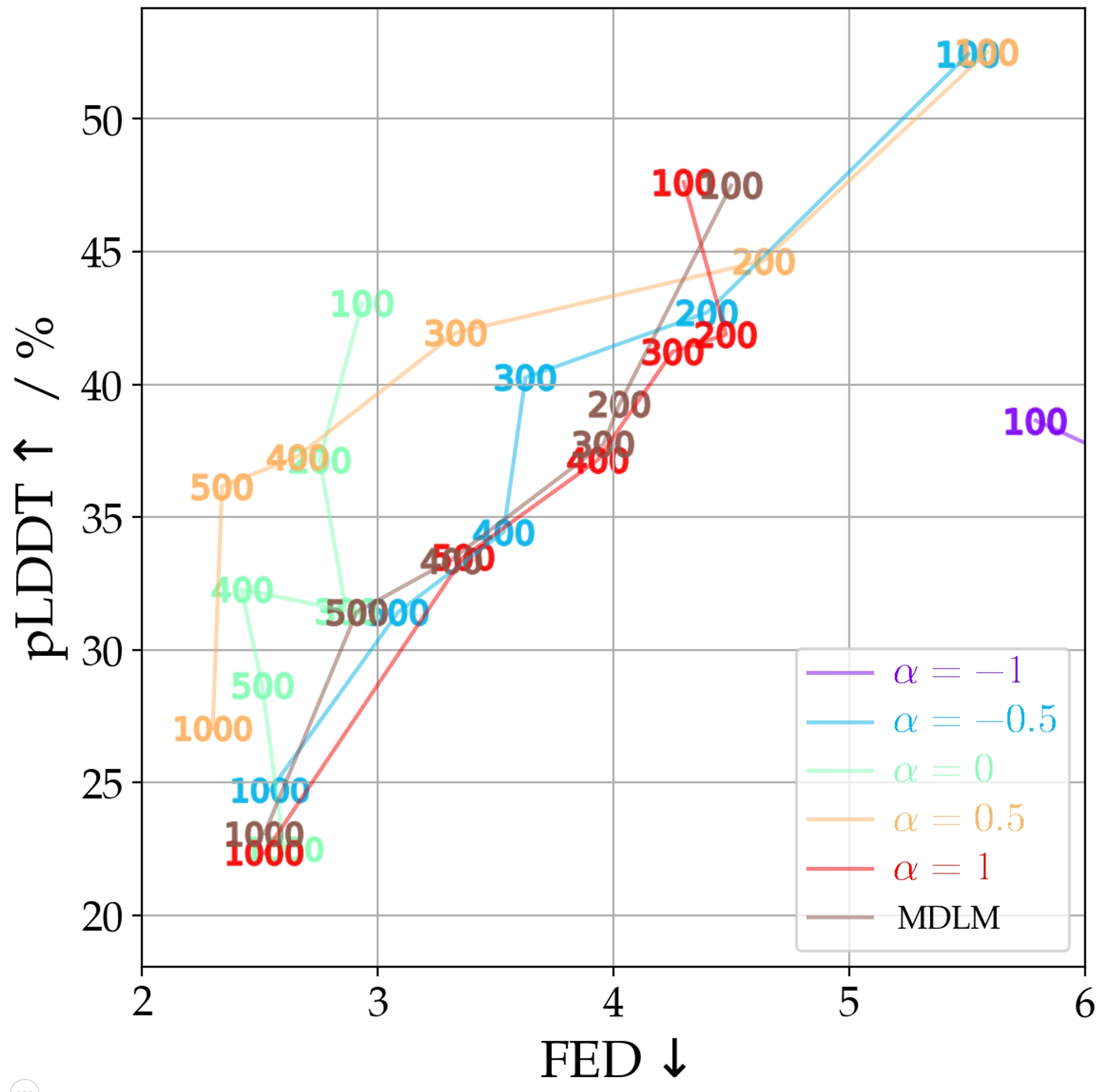}
% \vspace{-.3em}
\captionof{figure}{pLDDT vs FED scores for different variants of $\alpha$-flow and MDLM (the best DS-DFM model).}
\label{fig:uniref_alpha}
\end{minipage}
\end{table*}

Similar to previous tasks, the metric- and exponential-class models achieved better NLLs than the mix-class models among CS-DFM.
We noted that the CS-DFM models captured the entropy more accurately (i.e., diversity), whereas the DS-DFM models achieved better NLLs (i.e., consistency). As noted in previous work \citep{campbell2024generative}, such a proxy NLL can be fooled into arbitrarily low values by repeating high-frequency but meaningless words. $\alpha$-flow, on the other hand, offers a more flexible trade-off between the NLL and entropy, with $\alpha=0$ achieving better NLL and $\alpha=\pm 1,0.5$ achieving better entropy. This indicates CS-DFM models are more resistant to hallucination \citep{aithal2024understanding} than DS-DFM models, offering better distributional fitness to the ground truth dataset.  
% In this way, the CS- and DS-DFM models may offer a trade-off between diversity and consistency depending on the objective of the generative task.

\subsection{Protein Design: UniRef50}

The UniRef50 dataset \citep{suzek2015uniref} consists of about 45 million protein sequences, totaling about 14B amino acid tokens. Following previous work \citep{alamdari2023protein,wang2024diffusion}, we use a vocabulary size of 33 and a maximum sequence length of 1024, and chunk longer proteins into shorter parts. We used a 12-layer DiT-based predictor for all models and trained each model with a total batch size of 800k tokens and a total number of 160B tokens.
To quantitatively evaluate the generative models, we follow a similar approach in \citet{frey2023protein} to calculate the Fréchet distance between the hidden activations of the data and generations extracted by the pre-trained ESM-2 model \citep{lin2023evolutionary}, which we term Fréchet ESM distance (FED). A lower FED score indicates better distributional fitness in terms of data diversity. Additionally, we also follow \citet{wang2024diffusion,wang2024dplm} to calculate the predicted local distance difference test (pLDDT) score using the ESMFold model \citep{lin2023evolutionary} to reflect the foldability (likelihood) of the generated protein sequences. For each model, 100 sequences of each length of $[100,200,300,400,500,1000]$ were generated using 256 Euler steps for calculating the statistics. The results are summarized in Table~\ref{tab:uniref}.

\begin{table}[ht]
\vspace{-0.5em}
\centering
\caption{FED and pLDDT scores for generated protein sequences of various lengths using different models on the UniRef50 dataset. The best is in \textbf{bold}, and the second best is \underline{underlined}.}\label{tab:uniref}
% \resizebox{\linewidth}{!}{
\small
\begin{tabular}{@{}llcccccccc@{}}
\toprule
Model & Len & $\alpha=-1$ & $\alpha=-0.5$ & $\alpha=0$ & $\alpha=0.5$ & $\alpha=1$ & Linear & MDLM & DFM \\ \midrule
\multirow{6}{*}{\resizebox{1cm}{!}{pLDDT $\uparrow$}} & 100 & 38.64 & {\ul 52.45} & 43.05 & \textbf{52.53} & 47.62 & 39.55 & 47.48 & 35.74 \\
 & 200 & 28.70 & {\ul 42.70} & 37.16 & \textbf{44.65} & 41.87 & 37.96 & 39.26 & 28.42 \\
 & 300 & 25.41 & 40.26 & 31.47 & \textbf{41.95} & {\ul 41.24} & 34.99 & 37.75 & 25.80 \\
 & 400 & 23.79 & 34.44 & 32.28 & \textbf{37.26} & {\ul 37.14} & 32.68 & 33.37 & 23.71 \\
 & 500 & 22.53 & 31.41 & 28.67 & \textbf{36.16} & {\ul 33.50} & 30.72 & 31.43 & 22.08 \\
 & 1000 & 19.71 & {\ul 24.71} & 22.49 & \textbf{27.05} & 22.35 & 22.74 & 23.09 & 18.30 \\ \midrule
\multirow{6}{*}{FED $\downarrow$} & 100 & 5.79 & 5.51 & 2.93 & 5.59 & 4.30 & \textbf{2.66} & 4.50 & {\ul 2.91} \\
 & 200 & 8.24 & 4.40 & \textbf{2.75} & 4.64 & 4.48 & {\ul 2.79} & 4.03 & 4.05 \\
 & 300 & 8.59 & 3.63 & \textbf{2.87} & {\ul 3.33} & 4.25 & \textbf{2.87} & 3.96 & 4.47 \\
 & 400 & 8.46 & 3.53 & \textbf{2.42} & 2.66 & 3.93 & {\ul 2.50} & 3.31 & 4.42 \\
 & 500 & 8.84 & 3.09 & {\ul 2.51} & \textbf{2.34} & 3.36 & 2.57 & 2.91 & 4.38 \\
 & 1000 & 9.76 & 2.54 & 2.63 & \textbf{2.30} & {\ul 2.52} & 2.77 & {\ul 2.52} & 4.76 \\ \bottomrule
\end{tabular}
% }
% \vspace{-0.5em}
\end{table}

Again, CS-DFM models significantly outperformed DS-DFM models on this task, especially regarding the pLDDT scores. Interestingly, the $\alpha=0.5$ variant achieved the best performance, with the highest foldability as well as decent FED scores.
In Figure~\ref{fig:uniref_alpha}, we further observe how $\alpha$ impacts the subtle trade-off between consistency (pLDDT) and diversity (FED). For example, the metric-class model ($\alpha=0$) consistently maintains the consistency with the lowest FED scores, whereas $\alpha=\pm 0.5$ tends to sacrifice diversity for consistency on shorter sequences and maintains both properties for longer sequences.
Our findings suggest that tuning $\alpha$ has a significant impact on the model's generalization, which further necessitates the exploration of intermediate $\alpha$ values that have not been explored in previous work. Indeed, as a continuous family of generative models, $\alpha$-flow offers a broader architecture design space that can potentially lead to better performance, as demonstrated by our results on UniRef50.

% Regarding the fitness for data distribution, the metric-class and Linear models performed best with the lowest FED scores. Interestingly, for pLDDT, on the other hand, the exponential-class model excelled. While higher pLDDT scores indicate higher prediction confidence, they also indicate more conservativeness of the sequence, as ESMFold may have been trained on similar data. In this way, a better fitness for data distribution does not guarantee better consistency, and vice versa.

\subsection{Discussion on $\alpha$-Flow}
With extensive experiments of different instantiations of $\alpha$-flow and baseline DS-DFM models across diverse domains, we now discuss their performance and scaling behaviors. 
We first note that the metric- and exponential-class models often have better generative performance than the mix-class models, indicating that the naive flat geometry may fail to capture the complex data manifold.
Additionally, we have demonstrated that intermediate $\alpha$ values can also achieve decent and even better performance, which further suggests the necessity for $\alpha$-tuning for different tasks and metrics. We show that $\alpha$ has a significant impact on the trade-off between consistency (likelihood) and diversity (fitness) in a dataset-dependent fashion. This can be partially understood as the delicate tradeoff is not often homogeneous across domains. Nevertheless, such heterogeneity does not affect the meaningfulness of our proposed design space of continuously choosing $\alpha\in[-1,1]$, given its potential for upside in several tasks. Indeed, compared to DS-DFM models, our $\alpha$-flow offers a more flexible approach that allows for tuning such a trade-off, leading to potentially the optimal geometry in a task-specific manner.
% Additionally, comparing CS- and DS-DFM models, we noticed that their best performance is comparable on large-scale datasets like Text8 and UniRef. We also note that CS-DFM models are better at capturing data diversity, whereas DS-DFM models are better at maintaining consistency in generation. Both properties are essential in practical discrete generative tasks, and the best practice depends on the generation objective. For example, for designing new-to-nature proteins, CS-DFM may offer more diversity for exploration; conversely, for conditional design tasks, DS-DFM may provide better consistency.

\section{Related Work}
We have mentioned the connections of existing DFM models to $\alpha$-flow in Section~\ref{sec:classification}. Here, we provide a more organized summary of existing work.

\textbf{Continuous-State Discrete Flow Matching.}
The linear flow matching model was compared as a baseline in \citet{stark2024dirichlet,cheng2024categorical}. The mixture-class model that utilizes the Riemannian norm has not been explored in previous work.
The metric-class model coincides with those proposed in \citet{cheng2024categorical,davis2024fisher}.
For the exponential-class model, \citet{boll2024generative,boll2024generative2} used an alternative parameterization on the assignment manifold, which is related to the $e$-geodesic. \citet{hoogeboom2021argmax,mahabadi2023tess,li2024full} operated linearly on the logit space without the manifold constraint on the intermediate steps. 
It is also worth noting that not all CS-DFM models can be classified as operating on some $\alpha$-representation. For example, \citet{avdeyev2023dirichlet} considered probability paths defined by the Jacobi diffusion process, and \citet{stark2024dirichlet} of Dirichlet distributions. 
% Interpreting these two works as a special case of $\alpha$-geodesic is generally more challenging.

\textbf{Discrete-State Discrete Flow Matching.}
Generally, we also consider various masked diffusion models and their continuous-time extensions as part of this class. DS-DFM models are characterized by jumps between discrete categories, modeled as the transition between the states of a Markov chain. In DS-DFM models, the reverse diffusion process is parameterized and learned by a neural net to stochastically convert the initial tokens (often masks) to the target tokens along the probability paths. \citet{austin2021structured} adopted the general form of transition matrix and \citet{campbell2024generative} extended it to continuous-time formulation. \citet{lou2023discrete,sahoo2024simple,shi2024simplified} all followed the linear interpolation path between one-hot distributions. \citet{shaul2024flow} adopted a kinetic-optimal perspective to introduce the metric-geodesic as the probability path, which coincides with our $0$-geodesic.

\textbf{Information Geometry in Generative Modeling.}
\citet{cheng2024categorical,davis2024fisher} leveraged the canonical Riemannian geometry on the statistical manifold and noted the relation between the metric-geodesic and the natural gradient. \citet{boll2024generative,boll2024generative2} utilized the assignment manifold related to the $e$-geodesic for flow matching. \citet{kimura2024density} utilized $\alpha$-geodesic for density ratio estimation. To the best of our knowledge, we are the first to adopt $\alpha$-geodesics for discrete flow matching and offer a rigorous and unified mathematical framework. We further establish an ELBO for the discrete NLL and demonstrate optimality for $\alpha$-flow, both of which are crucial for generative models but lack systematic analyses before.

\section{Conclusion}
In this work, we present a unified framework for continuous-state DFM models and propose $\alpha$-flow as a mathematically rigorous generative model that adheres to $\alpha$-geometry. Building upon such a theoretical framework, we establish a unified variational bound and demonstrate optimality from the kinetic-optimal perspective for $\alpha$-flow. We conduct comprehensive experiments with different $\alpha$ values across various domains to explore the practical performance of $\alpha$-flow. We observe the impact of $\alpha$ on the trade-off between diversity and consistency, which further necessitates the exploration for $\alpha$-tuning in a task-specific manner.
Built upon such a unified theoretical framework generally applicable to all $\alpha$-flow variants, we hope to inspire new CS-DFM frameworks that offer more flexibility than the DS-DFM models. For example, one may consider choosing $\alpha$ adaptively or consider the mixture of $\alpha$-geodesics.

% \newpage

\section*{Broader Impact}
This work focuses on developing a general theoretical framework for discrete generative modeling. Our $\alpha$-flow offers a generally applicable framework in diverse downstream applications, including image generation, natural language modeling, and protein design. In this way, our framework can potentially positively impact various domains, and we are dedicated to ensuring responsible usage of our framework.

%%%%%%%%% REFERENCES
\bibliography{bibs/flow,bibs/infogeo,bibs/others}

\begin{thebibliography}{51}
\providecommand{\natexlab}[1]{#1}
\providecommand{\url}[1]{\texttt{#1}}
\expandafter\ifx\csname urlstyle\endcsname\relax
  \providecommand{\doi}[1]{doi: #1}\else
  \providecommand{\doi}{doi: \begingroup \urlstyle{rm}\Url}\fi

\bibitem[Aithal et~al.(2024)Aithal, Maini, Lipton, and Kolter]{aithal2024understanding}
Sumukh~K Aithal, Pratyush Maini, Zachary Lipton, and J~Zico Kolter.
\newblock Understanding hallucinations in diffusion models through mode interpolation.
\newblock \emph{Advances in Neural Information Processing Systems}, 37:\penalty0 134614--134644, 2024.

\bibitem[Alamdari et~al.(2023)Alamdari, Thakkar, van~den Berg, Tenenholtz, Strome, Moses, Lu, Fusi, Amini, and Yang]{alamdari2023protein}
Sarah Alamdari, Nitya Thakkar, Rianne van~den Berg, Neil Tenenholtz, Bob Strome, Alan Moses, Alex~Xijie Lu, Nicolo Fusi, Ava~Pardis Amini, and Kevin~K Yang.
\newblock Protein generation with evolutionary diffusion: sequence is all you need.
\newblock \emph{BioRxiv}, pages 2023--09, 2023.

\bibitem[Amari(1998)]{amari1998natural}
Shun-Ichi Amari.
\newblock Natural gradient works efficiently in learning.
\newblock \emph{Neural computation}, 10\penalty0 (2):\penalty0 251--276, 1998.

\bibitem[Amari(2016)]{amari2016information}
Shun-ichi Amari.
\newblock \emph{Information geometry and its applications}, volume 194.
\newblock Springer, 2016.

\bibitem[Amari and Cichocki(1998)]{amari1998adaptive}
Shun-ichi Amari and Andrzej Cichocki.
\newblock Adaptive blind signal processing-neural network approaches.
\newblock \emph{Proceedings of the IEEE}, 86\penalty0 (10):\penalty0 2026--2048, 1998.

\bibitem[Amari and Nagaoka(2000)]{amari2000methods}
Shun-ichi Amari and Hiroshi Nagaoka.
\newblock \emph{Methods of information geometry}, volume 191.
\newblock American Mathematical Soc., 2000.

\bibitem[Austin et~al.(2021)Austin, Johnson, Ho, Tarlow, and Van Den~Berg]{austin2021structured}
Jacob Austin, Daniel~D Johnson, Jonathan Ho, Daniel Tarlow, and Rianne Van Den~Berg.
\newblock Structured denoising diffusion models in discrete state-spaces.
\newblock \emph{Advances in Neural Information Processing Systems}, 34:\penalty0 17981--17993, 2021.

\bibitem[Avdeyev et~al.(2023)Avdeyev, Shi, Tan, Dudnyk, and Zhou]{avdeyev2023dirichlet}
Pavel Avdeyev, Chenlai Shi, Yuhao Tan, Kseniia Dudnyk, and Jian Zhou.
\newblock Dirichlet diffusion score model for biological sequence generation.
\newblock In \emph{International Conference on Machine Learning}, pages 1276--1301. PMLR, 2023.

\bibitem[Ay et~al.(2017)Ay, Jost, V{\^a}n~L{\^e}, and Schwachh{\"o}fer]{ay2017information}
Nihat Ay, J{\"u}rgen Jost, H{\^o}ng V{\^a}n~L{\^e}, and Lorenz Schwachh{\"o}fer.
\newblock \emph{Information geometry}, volume~64.
\newblock Springer, 2017.

\bibitem[Bauer et~al.(2024)Bauer, Le~Brigant, Lu, and Maor]{bauer2024p}
Martin Bauer, Alice Le~Brigant, Yuxiu Lu, and Cy~Maor.
\newblock The l p-fisher--rao metric and amari--\u{C}encov $\alpha$-connections.
\newblock \emph{Calculus of Variations and Partial Differential Equations}, 63\penalty0 (2):\penalty0 56, 2024.

\bibitem[Boll et~al.(2024{\natexlab{a}})Boll, Gonzalez-Alvarado, Petra, and Schn{\"o}rr]{boll2024generative2}
Bastian Boll, Daniel Gonzalez-Alvarado, Stefania Petra, and Christoph Schn{\"o}rr.
\newblock Generative assignment flows for representing and learning joint distributions of discrete data.
\newblock \emph{arXiv preprint arXiv:2406.04527}, 2024{\natexlab{a}}.

\bibitem[Boll et~al.(2024{\natexlab{b}})Boll, Gonzalez-Alvarado, and Schn{\"o}rr]{boll2024generative}
Bastian Boll, Daniel Gonzalez-Alvarado, and Christoph Schn{\"o}rr.
\newblock Generative modeling of discrete joint distributions by e-geodesic flow matching on assignment manifolds.
\newblock \emph{arXiv preprint arXiv:2402.07846}, 2024{\natexlab{b}}.

\bibitem[Campbell et~al.(2024)Campbell, Yim, Barzilay, Rainforth, and Jaakkola]{campbell2024generative}
Andrew Campbell, Jason Yim, Regina Barzilay, Tom Rainforth, and Tommi Jaakkola.
\newblock Generative flows on discrete state-spaces: Enabling multimodal flows with applications to protein co-design.
\newblock \emph{arXiv preprint arXiv:2402.04997}, 2024.

\bibitem[Chen and Lipman(2023)]{chen2023riemannian}
Ricky~TQ Chen and Yaron Lipman.
\newblock Riemannian flow matching on general geometries.
\newblock \emph{arXiv preprint arXiv:2302.03660}, 2023.

\bibitem[Cheng et~al.(2024)Cheng, Li, Peng, and Liu]{cheng2024categorical}
Chaoran Cheng, Jiahan Li, Jian Peng, and Ge~Liu.
\newblock Categorical flow matching on statistical manifolds.
\newblock \emph{arXiv preprint arXiv:2405.16441}, 2024.

\bibitem[Davis et~al.(2024)Davis, Kessler, Petrache, Ceylan, Bronstein, and Bose]{davis2024fisher}
Oscar Davis, Samuel Kessler, Mircea Petrache, Ismail~Ilkan Ceylan, Michael Bronstein, and Avishek~Joey Bose.
\newblock Fisher flow matching for generative modeling over discrete data.
\newblock \emph{arXiv preprint arXiv:2405.14664}, 2024.

\bibitem[Devlin(2018)]{devlin2018bert}
Jacob Devlin.
\newblock Bert: Pre-training of deep bidirectional transformers for language understanding.
\newblock \emph{arXiv preprint arXiv:1810.04805}, 2018.

\bibitem[Dunn and Koes(2024)]{dunn2024mixed}
Ian Dunn and David~Ryan Koes.
\newblock Mixed continuous and categorical flow matching for 3d de novo molecule generation.
\newblock \emph{ArXiv}, 2024.

\bibitem[Frey et~al.(2023)Frey, Berenberg, Zadorozhny, Kleinhenz, Lafrance-Vanasse, Hotzel, Wu, Ra, Bonneau, Cho, et~al.]{frey2023protein}
Nathan~C Frey, Daniel Berenberg, Karina Zadorozhny, Joseph Kleinhenz, Julien Lafrance-Vanasse, Isidro Hotzel, Yan Wu, Stephen Ra, Richard Bonneau, Kyunghyun Cho, et~al.
\newblock Protein discovery with discrete walk-jump sampling.
\newblock \emph{arXiv preprint arXiv:2306.12360}, 2023.

\bibitem[Gallot et~al.(2004)Gallot, Hulin, and Lafontaine]{gallot2004riemannian}
S.~Gallot, D.~Hulin, and J.~Lafontaine.
\newblock \emph{Riemannian Geometry}.
\newblock Universitext. Springer Berlin Heidelberg, 2004.
\newblock ISBN 9783540204930.
\newblock URL \url{https://books.google.com/books?id=6F4Umpws_gUC}.

\bibitem[Gat et~al.(2024)Gat, Remez, Shaul, Kreuk, Chen, Synnaeve, Adi, and Lipman]{gat2024discrete}
Itai Gat, Tal Remez, Neta Shaul, Felix Kreuk, Ricky~TQ Chen, Gabriel Synnaeve, Yossi Adi, and Yaron Lipman.
\newblock Discrete flow matching.
\newblock \emph{arXiv preprint arXiv:2407.15595}, 2024.

\bibitem[Han et~al.(2022)Han, Kumar, and Tsvetkov]{han2022ssd}
Xiaochuang Han, Sachin Kumar, and Yulia Tsvetkov.
\newblock Ssd-lm: Semi-autoregressive simplex-based diffusion language model for text generation and modular control.
\newblock \emph{arXiv preprint arXiv:2210.17432}, 2022.

\bibitem[Ho et~al.(2020)Ho, Jain, and Abbeel]{ho2020denoising}
Jonathan Ho, Ajay Jain, and Pieter Abbeel.
\newblock Denoising diffusion probabilistic models.
\newblock \emph{Advances in neural information processing systems}, 33:\penalty0 6840--6851, 2020.

\bibitem[Hoogeboom et~al.(2021)Hoogeboom, Nielsen, Jaini, Forr{\'e}, and Welling]{hoogeboom2021argmax}
Emiel Hoogeboom, Didrik Nielsen, Priyank Jaini, Patrick Forr{\'e}, and Max Welling.
\newblock Argmax flows and multinomial diffusion: Learning categorical distributions.
\newblock \emph{Advances in Neural Information Processing Systems}, 34:\penalty0 12454--12465, 2021.

\bibitem[Kierzenka and Shampine(2001)]{bvpsolver}
Jacek Kierzenka and Lawrence~F. Shampine.
\newblock A bvp solver based on residual control and the maltab pse.
\newblock \emph{ACM Trans. Math. Softw.}, 27\penalty0 (3):\penalty0 299–316, September 2001.
\newblock ISSN 0098-3500.
\newblock \doi{10.1145/502800.502801}.
\newblock URL \url{https://doi.org/10.1145/502800.502801}.

\bibitem[Kimura and Bondell(2024)]{kimura2024density}
Masanari Kimura and Howard Bondell.
\newblock Density ratio estimation via sampling along generalized geodesics on statistical manifolds.
\newblock \emph{arXiv preprint arXiv:2406.18806}, 2024.

\bibitem[LeCun et~al.(2010)LeCun, Cortes, and Burges]{lecun2010mnist}
Yann LeCun, Corinna Cortes, and CJ~Burges.
\newblock Mnist handwritten digit database.
\newblock \emph{ATT Labs [Online]. Available: http://yann.lecun.com/exdb/mnist}, 2, 2010.

\bibitem[Li et~al.(2024)Li, Cheng, Wu, Guo, Luo, Ren, Peng, and Ma]{li2024full}
Jiahan Li, Chaoran Cheng, Zuofan Wu, Ruihan Guo, Shitong Luo, Zhizhou Ren, Jian Peng, and Jianzhu Ma.
\newblock Full-atom peptide design based on multi-modal flow matching.
\newblock \emph{arXiv preprint arXiv:2406.00735}, 2024.

\bibitem[Lin et~al.(2023)Lin, Akin, Rao, Hie, Zhu, Lu, Smetanin, Verkuil, Kabeli, Shmueli, et~al.]{lin2023evolutionary}
Zeming Lin, Halil Akin, Roshan Rao, Brian Hie, Zhongkai Zhu, Wenting Lu, Nikita Smetanin, Robert Verkuil, Ori Kabeli, Yaniv Shmueli, et~al.
\newblock Evolutionary-scale prediction of atomic-level protein structure with a language model.
\newblock \emph{Science}, 379\penalty0 (6637):\penalty0 1123--1130, 2023.

\bibitem[Lipman et~al.(2022)Lipman, Chen, Ben-Hamu, Nickel, and Le]{lipman2022flow}
Yaron Lipman, Ricky~TQ Chen, Heli Ben-Hamu, Maximilian Nickel, and Matt Le.
\newblock Flow matching for generative modeling.
\newblock \emph{arXiv preprint arXiv:2210.02747}, 2022.

\bibitem[Lipman et~al.(2024)Lipman, Havasi, Holderrieth, Shaul, Le, Karrer, Chen, Lopez-Paz, Ben-Hamu, and Gat]{lipman2024flow}
Yaron Lipman, Marton Havasi, Peter Holderrieth, Neta Shaul, Matt Le, Brian Karrer, Ricky~TQ Chen, David Lopez-Paz, Heli Ben-Hamu, and Itai Gat.
\newblock Flow matching guide and code.
\newblock \emph{arXiv preprint arXiv:2412.06264}, 2024.

\bibitem[Lou et~al.(2023)Lou, Meng, and Ermon]{lou2023discrete}
Aaron Lou, Chenlin Meng, and Stefano Ermon.
\newblock Discrete diffusion language modeling by estimating the ratios of the data distribution.
\newblock \emph{arXiv preprint arXiv:2310.16834}, 2023.

\bibitem[Mahabadi et~al.(2023)Mahabadi, Ivison, Tae, Henderson, Beltagy, Peters, and Cohan]{mahabadi2023tess}
Rabeeh~Karimi Mahabadi, Hamish Ivison, Jaesung Tae, James Henderson, Iz~Beltagy, Matthew~E Peters, and Arman Cohan.
\newblock Tess: Text-to-text self-conditioned simplex diffusion.
\newblock \emph{arXiv preprint arXiv:2305.08379}, 2023.

\bibitem[Mahoney(2011)]{mahoney2011large}
Matt Mahoney.
\newblock Large text compression benchmark, 2011.

\bibitem[Morozova and Chentsov(1991)]{morozova1991markov}
Elena~Alexandra Morozova and Nikolai~Nikolaevich Chentsov.
\newblock Markov invariant geometry on manifolds of states.
\newblock \emph{Journal of Soviet Mathematics}, 56:\penalty0 2648--2669, 1991.

\bibitem[Peebles and Xie(2023)]{peebles2023scalable}
William Peebles and Saining Xie.
\newblock Scalable diffusion models with transformers.
\newblock In \emph{Proceedings of the IEEE/CVF International Conference on Computer Vision}, pages 4195--4205, 2023.

\bibitem[Radford et~al.(2019)Radford, Wu, Child, Luan, Amodei, Sutskever, et~al.]{radford2019language}
Alec Radford, Jeffrey Wu, Rewon Child, David Luan, Dario Amodei, Ilya Sutskever, et~al.
\newblock Language models are unsupervised multitask learners.
\newblock \emph{OpenAI blog}, 1\penalty0 (8):\penalty0 9, 2019.

\bibitem[Rao(1992)]{rao1992information}
C~Radhakrishna Rao.
\newblock Information and the accuracy attainable in the estimation of statistical parameters.
\newblock In \emph{Breakthroughs in Statistics: Foundations and basic theory}, pages 235--247. Springer, 1992.

\bibitem[Sahoo et~al.(2024)Sahoo, Arriola, Schiff, Gokaslan, Marroquin, Chiu, Rush, and Kuleshov]{sahoo2024simple}
Subham~Sekhar Sahoo, Marianne Arriola, Yair Schiff, Aaron Gokaslan, Edgar Marroquin, Justin~T Chiu, Alexander Rush, and Volodymyr Kuleshov.
\newblock Simple and effective masked diffusion language models.
\newblock \emph{arXiv preprint arXiv:2406.07524}, 2024.

\bibitem[Salakhutdinov and Murray(2008)]{salakhutdinov2008quantitative}
Ruslan Salakhutdinov and Iain Murray.
\newblock On the quantitative analysis of deep belief networks.
\newblock In \emph{Proceedings of the 25th international conference on Machine learning}, pages 872--879. ACM, 2008.

\bibitem[Shaul et~al.(2024)Shaul, Gat, Havasi, Severo, Sriram, Holderrieth, Karrer, Lipman, and Chen]{shaul2024flow}
Neta Shaul, Itai Gat, Marton Havasi, Daniel Severo, Anuroop Sriram, Peter Holderrieth, Brian Karrer, Yaron Lipman, and Ricky~TQ Chen.
\newblock Flow matching with general discrete paths: A kinetic-optimal perspective.
\newblock \emph{arXiv preprint arXiv:2412.03487}, 2024.

\bibitem[Shi et~al.(2024)Shi, Han, Wang, Doucet, and Titsias]{shi2024simplified}
Jiaxin Shi, Kehang Han, Zhe Wang, Arnaud Doucet, and Michalis~K Titsias.
\newblock Simplified and generalized masked diffusion for discrete data.
\newblock \emph{arXiv preprint arXiv:2406.04329}, 2024.

\bibitem[Song and Ermon(2019)]{song2019generative}
Yang Song and Stefano Ermon.
\newblock Generative modeling by estimating gradients of the data distribution.
\newblock \emph{Advances in neural information processing systems}, 32, 2019.

\bibitem[Song and Ermon(2020)]{song2020improved}
Yang Song and Stefano Ermon.
\newblock Improved techniques for training score-based generative models.
\newblock \emph{Advances in neural information processing systems}, 33:\penalty0 12438--12448, 2020.

\bibitem[Song et~al.(2023)Song, Gong, Xu, Cao, Lan, Ermon, Zhou, and Ma]{song2023equivariant}
Yuxuan Song, Jingjing Gong, Minkai Xu, Ziyao Cao, Yanyan Lan, Stefano Ermon, Hao Zhou, and Wei-Ying Ma.
\newblock Equivariant flow matching with hybrid probability transport.
\newblock \emph{arXiv preprint arXiv:2312.07168}, 2023.

\bibitem[Stark et~al.(2024)Stark, Jing, Wang, Corso, Berger, Barzilay, and Jaakkola]{stark2024dirichlet}
Hannes Stark, Bowen Jing, Chenyu Wang, Gabriele Corso, Bonnie Berger, Regina Barzilay, and Tommi Jaakkola.
\newblock Dirichlet flow matching with applications to dna sequence design.
\newblock \emph{arXiv preprint arXiv:2402.05841}, 2024.

\bibitem[Suzek et~al.(2015)Suzek, Wang, Huang, McGarvey, Wu, and Consortium]{suzek2015uniref}
Baris~E Suzek, Yuqi Wang, Hongzhan Huang, Peter~B McGarvey, Cathy~H Wu, and UniProt Consortium.
\newblock Uniref clusters: a comprehensive and scalable alternative for improving sequence similarity searches.
\newblock \emph{Bioinformatics}, 31\penalty0 (6):\penalty0 926--932, 2015.

\bibitem[Szegedy et~al.(2016)Szegedy, Vanhoucke, Ioffe, Shlens, and Wojna]{szegedy2016rethinking}
Christian Szegedy, Vincent Vanhoucke, Sergey Ioffe, Jon Shlens, and Zbigniew Wojna.
\newblock Rethinking the inception architecture for computer vision.
\newblock In \emph{Proceedings of the IEEE conference on computer vision and pattern recognition}, pages 2818--2826, 2016.

\bibitem[Wang and Komatsuzaki(2021)]{gpt-j}
Ben Wang and Aran Komatsuzaki.
\newblock {GPT-J-6B: A 6 Billion Parameter Autoregressive Language Model}.
\newblock \url{https://github.com/kingoflolz/mesh-transformer-jax}, May 2021.

\bibitem[Wang et~al.(2024{\natexlab{a}})Wang, Zheng, Ye, Xue, Huang, and Gu]{wang2024diffusion}
Xinyou Wang, Zaixiang Zheng, Fei Ye, Dongyu Xue, Shujian Huang, and Quanquan Gu.
\newblock Diffusion language models are versatile protein learners.
\newblock \emph{arXiv preprint arXiv:2402.18567}, 2024{\natexlab{a}}.

\bibitem[Wang et~al.(2024{\natexlab{b}})Wang, Zheng, Ye, Xue, Huang, and Gu]{wang2024dplm}
Xinyou Wang, Zaixiang Zheng, Fei Ye, Dongyu Xue, Shujian Huang, and Quanquan Gu.
\newblock Dplm-2: A multimodal diffusion protein language model.
\newblock \emph{arXiv preprint arXiv:2410.13782}, 2024{\natexlab{b}}.

\end{thebibliography}
\bibliographystyle{plainnat}

%%%%%%%%%%%%%%%%%%%%%%%%%%%%%%%%%%%%%%%%%%%%%%%%%%%%%%%%%%%%%%%%%%%%%%%%%%%%%%%
%%%%%%%%%%%%%%%%%%%%%%%%%%%%%%%%%%%%%%%%%%%%%%%%%%%%%%%%%%%%%%%%%%%%%%%%%%%%%%%
% APPENDIX
%%%%%%%%%%%%%%%%%%%%%%%%%%%%%%%%%%%%%%%%%%%%%%%%%%%%%%%%%%%%%%%%%%%%%%%%%%%%%%%
%%%%%%%%%%%%%%%%%%%%%%%%%%%%%%%%%%%%%%%%%%%%%%%%%%%%%%%%%%%%%%%%%%%%%%%%%%%%%%%
\newpage
\appendix
\onecolumn
\centerline{\Large\bf Supplementary Material}

\section{Riemannian Geometry}\label{supp:riemannian}
In this section, we briefly introduce the key concepts in Riemannian geometry used in this work. A more comprehensive background on the Riemannian manifold can be found in standard mathematics textbooks like \citet{gallot2004riemannian}.

A Riemannian manifold $\mathcal{M}$ is a real, smooth manifold equipped with a positive definite inner product $g$ on the tangent space $T_x\mathcal{M}$ at each point $x\in\mathcal{M}$. Let $T\mathcal{M}=\bigcup_{x\in\mathcal{M}}T_x\mathcal{M}$ be the \emph{tangent bundle} of the manifold $\mathcal{M}$, a time-dependent \emph{vector field} on $\mathcal{M}$ is a mapping $u_t:[0,1]\times\mathcal{M}\to T\mathcal{M}$ where $u_t(x)\in T_x\mathcal{M}$. 
Let $\Gamma(T\mathcal{M})$ denote the space of vector fields on $\mathcal{M}$, an \emph{affine connection} or \emph{covaraint derivative} 
\begin{equation}
\begin{aligned}
    \nabla:\Gamma(T\mathcal{M})\otimes \Gamma(T\mathcal{M})&\to\Gamma(T\mathcal{M})\\
    (V,W)&\mapsto \nabla_V W
\end{aligned}
\end{equation}
is a bilinear map on $\mathcal{M}$ that
% \begin{enumerate}
%     \item $\nabla$ is tensorial in the first argument.
%     \begin{align}
%         \nabla_{V_1+V_2}W&=\nabla_{V_1}W+\nabla_{V_2}W,\quad\forall V_1,V_2,W\in\Gamma(T\mathcal{M}),\\
%         \nabla_{fV}W&=f\nabla_{V}W,\quad\forall f\in C^\infty(\mathcal{M}),V,W\in\Gamma(T\mathcal{M}) .
%     \end{align}
%     \item $\nabla$ is $\mathbb{R}$-linear in the second argument:
%     \begin{equation}
%         \nabla_{V}(W_1+W_2)=\nabla_{V}W_1+\nabla_{V}W_2,\quad\forall V,W_1,W_2\in\Gamma(T\mathcal{M})
%     \end{equation}
%     and it satisfies the Leibniz rule
%     \begin{equation}
%         \nabla_{V}(f W)=V(f) W+f \nabla_{V} W,\quad\forall f\in C^\infty(\mathcal{M}),V,W\in\Gamma(T\mathcal{M}) .
%     \end{equation}
% \end{enumerate}
generalizes the idea of directional derivatives in the Euclidean case and allows for the differentiation of vector fields at different points. An affine connection can be defined using its \emph{Christoffel symbols} $\Gamma_{ij}^k$, which are in turn defined via:
\begin{equation}
    \nabla_{\partial x^i}\partial x^j:=\Gamma^k_{ij}\partial x^k
\end{equation}
where $\{\partial x^i\}$ are the local coordinate bases of the vector field. 
A smooth curve $\gamma:[0,1]\to\mathcal{M}$ is called a \emph{geodesic} if it is autoparallel or:
\begin{equation}
    \nabla_{\dot\gamma}\dot\gamma=0. \label{eqn:geodesic_eqn}
\end{equation}
Intuitively, the vector field $\dot\gamma$ along the curve remains ``constant'' as evaluated by the affine connection. The geodesic equation in Eq.\ref{eqn:geodesic_eqn} can be written locally in terms of Christoffel symbols using the Einstein summation convention as:
\begin{equation}
    \ddot{\gamma}^k+\Gamma^k_{ij}\dot\gamma^i\dot\gamma^j=0,\quad k=1,\dots,n .
\end{equation}
A \emph{Levi-Civita connection} is the unique affine connection that is torsion-free and metric compatible $\nabla g=0$. As the Levi-Civita connection preserves the metric (and, therefore, preserves the length), we arrive at the standard definition of geodesic as a locally distance-minimizing curve on the manifold. However, this property does not always hold for general geodesics.

The existence and the uniqueness of the geodesic state that for any point $x\in\mathcal{M}$ and for any tangent vector $u\in T_x\mathcal{M}$, there exists a unique geodesic $\gamma:[0,1]\to\mathcal{M}$ such that $\gamma(0)=x$ and $\dot\gamma(0)=u$. The \emph{exponential map} $\exp:\mathcal{M}\times T\mathcal{M}\to\mathcal{M}$ is uniquely defined to be $\exp_x(u):=\gamma(1)$. The \emph{logarithm map} $\log:\mathcal{M}\times \mathcal{M}\to T\mathcal{M}$ is defined as the inverse mapping of the exponential map such that $\exp_x(\log_x(y))\equiv y,\forall x,y\in \mathcal{M}$. With the exponential map and logarithm map, the time-dependent flow can be compactly written as time interpolation along the geodesic:
\begin{equation}
    x_t:=\gamma_{x,y}(t)=\exp_{x}(t\log_{x}y),\quad t\in[0,1] .
\end{equation}

\section{Information Geometry}
In this section, we give a more comprehensive introduction to information geometry and the geometry of the statistical manifold of categorical distributions. Information geometry studies statistical manifolds, which are Riemannian manifolds whose points correspond to probability distributions. It can be traced back to C. R. Rao's seminal work in 1945 \citep{rao1992information} studying the Fisher information metric. Modern advances can be found in various mathematical textbooks of \citet{amari2000methods,amari2016information,ay2017information}.

\subsection{Statistical Manifolds}
The \emph{statistical manifold} can be intuitively understood as the manifold of a family of parameterized probability distributions. More rigorously, for a sample space $\mathcal{X}$ and its reference measure $\nu$, we consider the parameterized probability distributions defined using the Radon-Nikodym derivative $\mathcal{P}(\mathcal{X})=\{p_\theta:\int_\mathcal{X}p(x;\theta)\diff\nu=1\}$. The parameterization $\theta\in\Theta$ gives a natural local chart on the statistical manifold. The Fisher information metric provides a canonical way of equipping the statistical manifold with a Riemannian structure and is defined as
\begin{equation}
    g_{ij}(\theta)=\mathbb{E}_X\left[\frac{\partial\log p(X;\theta)}{\partial \theta^i}\frac{\partial\log p(X;\theta)}{\partial \theta^j}\right]=\int_\mathcal{X}\frac{\partial\log p(x;\theta)}{\partial \theta^i}\frac{\partial\log p(x;\theta)}{\partial \theta^j} p(x;\theta)\diff\nu . \label{eqn:fisher}
\end{equation}

As we primarily focus on discrete generative modeling, we will consider the discrete sample space $\mathcal{X}=\{1,2,\dots,n\}$ with the canonical counting measure as the reference measure. In this way, any probability distribution can be represented as a convex combination of Dirac measures $\delta^i$ as $\mu=\mu_i\delta^i,\sum_{i=1}^n\mu_i=1,0\le \mu_i\le 1$. Therefore, we can visualize such a statistical manifold as a probability simplex. Direct calculation gives the Fisher information metric as
\begin{equation}
    g_{ij}(\mu)=\frac{\delta_{ij}}{\mu_i}+\frac{1}{\mu_n},\quad 1\le i,j,\le n-1,
\end{equation}
and leads to the Riemannian inner product in Eq.\ref{eqn:inner}.

\subsection{$\alpha$-Representation for Positive Measures} \label{supp:alpha_rep_m}
The traditional way of obtaining affine connections on the statistical manifold starts with the definitions of the $m$- and $e$-connections on the manifold of positive measures $\mathcal{M}_+=\mathcal{M}_+(\mathcal{X})$ and then projects them onto $\mathcal{P}_+\subseteq\mathcal{M}_+$. We will briefly introduce the $\alpha$-geometry for positive measure $m\in\mathcal{M}_+$ for completeness of the mathematical background. We refer interested readers to mathematical textbooks like Chapter 2.5.2 in \citet{ay2017information}.

The $\alpha$-representation of a positive measure $m$ adopts the identical formula as Definition~\ref{def:alpha_rep}:
\begin{definition}[$\alpha$-representation on $\mathcal{M}_+$]
For $\alpha\in[-1,1]$, the $\alpha$-\emph{representation} of a positive measure $m\in\mathcal{M}_+$ is defined by the embedding $\pi^{(\alpha)}:\mathcal{M}_+\hookrightarrow\mathbb{R}_+^n$:
\begin{equation}
m\mapsto x=
\begin{cases}
    m^\frac{1-\alpha}{2}, &\alpha\ne 1\\
    \log m, &\alpha=1
\end{cases} .
\end{equation}
\end{definition}

\begin{definition}[$\alpha$-connection on $\mathcal{M}_+$]
For $\alpha\in[-1,1]$, the $\alpha$-\emph{connection} on $\mathcal{M}_+$ is given by
\begin{equation}
    \tilde\nabla^{(\alpha)}_A B :=\left(a_j\frac{\partial b_i}{\partial m_j} - \frac{1+\alpha}{2}\frac{a_i b_i}{m_i}\right)\partial m^i.
\end{equation}
\end{definition}
Let $\tilde\nabla^{(e)}:=\tilde\nabla^{(1)},\tilde\nabla^{(m)}:=\tilde\nabla^{(-1)}$, we have
\begin{equation}
    \tilde\nabla^{(\alpha)}=\frac{1-\alpha}{2} \tilde\nabla^{(m)}+\frac{1+\alpha}{2} \tilde\nabla^{(e)}.
\end{equation}

 Simplifying Eq.\ref{eqn:geodesic_eqn}, we can obtain the following $\alpha$-geodesic equation on $\mathcal{M}_+$:
\begin{equation}
    \ddot{\gamma}-\frac{1+\alpha}{2}\frac{\dot{\gamma}^{2}}{\gamma}=0,
\end{equation}
which gives the following closed-form solution:
\begin{equation}
    m_t=\gamma_{m_0,m_1}^{(\alpha)}(t)=\left((1-t)m_0^{1/p}+tm_1^{1/p}\right)^{p} \label{eqn:unconstraint_geo}
\end{equation}
for the boundary conditions $\gamma(0)=m_0,\gamma(1)=m_1$. With the initial conditions, the $\alpha$-geodesic equation in Eq.\ref{eqn:unconstraint_geo} gives the closed-form exponential map of:
\begin{equation}
    \exp_m^{(\alpha)}a=\left(m^{1/p}+\frac{m^{1/p}}{pm}a\right)^{p}.
\end{equation}

$\alpha$-representations are closely related to $\alpha$-\emph{connections} that define different geometry on the statistical manifold. Let $p=2/(1-\alpha)$, the $\alpha$-\emph{geometry} for positive measures can be understood as operating on such $\alpha$-representations with linear interpolation:
\begin{equation}
    m_t^{1/p}:=\gamma^{(\alpha)}_{m_0,m_1}(t)^{1/p}=(1-t)m_0^{1/p}+tm_1^{1/p} .
\end{equation}
One can see the $\alpha$-geometry for positive measures is flat (actually \emph{dually flat}, see Chapter 4.2 in \citet{amari2016information}). Nonetheless, working with $\mathcal{P}_+\subseteq \mathcal{M}_+$ requires additional projection, which we will discuss below.

\subsection{$\alpha$-Connections and $\alpha$-Geodesics on Statistical Manifold}\label{supp:alpha_conn}
We now proceed to the $\alpha$-geometry on the statistical manifold $\mathcal{P}_+$.
% \begin{definition}[$m$-connection]
% $m$-\emph{connection} on the statistical manifold of categorical distributions is given by
% \begin{equation}
%     \nabla^{(m)}_A B :=a_j\frac{\partial b_i}{\partial \mu_j} \partial\mu^j.
% \end{equation}
% \end{definition}
% \begin{definition}[$e$-connection]
% $e$-\emph{connection} on the statistical manifold of categorical distributions is given by
% \begin{equation}
%     \nabla^{(e)}_A B :=\left(a_j\frac{\partial b_i}{\partial \mu_j} - \frac{a_i b_i}{\mu_i}+\mu_i\langle a,b\rangle_\mu \right)\partial\mu^i.
% \end{equation}
% \end{definition}

\begin{definition}[$\alpha$-connection on $\mathcal{P}_+$]
For $\alpha\in[-1,1]$, the $\alpha$-\emph{connection} on $\mathcal{P}_+$ is given by
\begin{equation}
    \nabla^{(\alpha)}_A B :=\left(a_j\frac{\partial b_i}{\partial \mu_j} - \frac{1+\alpha}{2}\left(\frac{a_i b_i}{\mu_i}-\mu_i\langle a,b\rangle_\mu \right)\right)\partial\mu^i.
\end{equation}
\end{definition}
$\alpha$-geodesic is defined as the geodesic induced by $\nabla^{(\alpha)}$. Simplifying Eq.\ref{eqn:geodesic_eqn}, we can obtain the following $\alpha$-geodesic equation on the statistical manifold:
\begin{equation}
    \ddot{\gamma}-\frac{1+\alpha}{2}\left(\frac{\dot{\gamma}^{2}}{\gamma}-\gamma \sum_{j} \frac{\dot{\gamma}_{j}^{2}}{\gamma_{j}}\right)=0 . \label{eqn:alpha_geodesic_eqn}
\end{equation}

It is reasonable to make a solution ansatz by normalization of the unconstrained $\alpha$-geodesics for positive measures in Eq.\ref{eqn:unconstraint_geo} to obtain the $\alpha$-geodesics on the statistical manifold. However, as noted in previous mathematical work \citep{morozova1991markov,bauer2024p}, both normalized curves of the exponential map and interpolation have to be reparameterized.
For solving the reparameterizations, \citet{bauer2024p} established the following theorems that allow for explicit solutions using numerical ODE solvers. As we have already defined the reversible mapping of $\alpha$-representation in Eq.\ref{eqn:mapping} and its corresponding vector field mapping in Eq.\ref{eqn:vf_mapping}, we will use $\tau_{x,u}:=\tau_{\mu,a},\tau_{x,y}:=\tau_{\mu,\nu}$ in the following context.

\begin{theorem}[$\tau_{x,u}$]\label{thm:reparam_exp}
Let $x\in S_p$ and $u\in T_x S_p$, and let $\tau_{x,u}$ satisfy the ODE with initial values:
\begin{align}
\ddot{\tau}(t) & =2 \frac{\sum_i(x_i+\tau(t) u_i)^{p-1} u_i}{\|x+\tau(t) u\|_p^{p}} \dot{\tau}(t)^{2}, \\
\tau(0) & =0, \\
\dot{\tau}(0) & =1,
\end{align}
then $\tilde\gamma:[0,1]\to S_p$ defined by
\begin{equation}
    \tilde\gamma(t)=\frac{x+\tau(t) u}{\|x+\tau(t) u\|_p}
\end{equation}
is the $\alpha$-geodesic on $S_p$, with initial conditions $\tilde\gamma(0)=x,\dot{\tilde\gamma}(0)=u$.
\end{theorem}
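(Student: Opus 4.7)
The plan is to verify the ansatz directly by pulling $\tilde\gamma$ back to the probability simplex via the inverse of the $\alpha$-representation and substituting into the $\alpha$-geodesic equation Eq.~\ref{eqn:alpha_geodesic_eqn} on $\mathcal{P}_+$; the resulting condition should reduce to the scalar ODE for $\tau$ stated in the theorem.

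First I would fix notation: let $w(t) = x + \tau(t)u$, $N(t) = \|w(t)\|_p$, and $\gamma_i(t) = w_i(t)^p / N(t)^p$, so that $\pi^{(\alpha)}(\gamma) = \tilde\gamma$ and $\sum_i \gamma_i(t) \equiv 1$ by construction, placing $\gamma$ on $\mathcal{P}_+$. The boundary conditions are handled by direct evaluation at $t=0$: since $\tau(0)=0$ we have $w(0)=x$, $N(0)=\|x\|_p=1$, and $\tilde\gamma(0)=x$; differentiating $\tilde\gamma_i = w_i/N$ gives $\dot{\tilde\gamma}_i(0) = \dot\tau(0)u_i - x_i \dot N(0)$, where $\dot N(0)$ vanishes because the tangent space condition $u \in T_x S_p$ is precisely $\sum_j x_j^{p-1}u_j = 0$. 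Hence $\dot{\tilde\gamma}(0)=u$ as soon as $\dot\tau(0)=1$.

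The core computation is to check that $\gamma$ solves the $\alpha$-geodesic equation on $\mathcal{P}_+$. Introducing the shorthands $A = N^p$, $S = \sum_j w_j^{p-1}u_j$, $T = \sum_j w_j^{p-2}u_j^2$, and $\tilde\alpha_i = w_i^{p-1}u_i - \gamma_i S$ (which satisfies $\sum_i \tilde\alpha_i = 0$), two rounds of differentiation yield $\dot\gamma_i = (p\dot\tau/A)\tilde\alpha_i$ and an expression for $\ddot\gamma_i$ as a combination of $\tilde\alpha_i$ and $(w_i^{p-2}u_i^2 - \gamma_i T)$ with coefficients built from $\ddot\tau$, $\dot\tau^2$, $A$, and $S$. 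The right-hand side of Eq.~\ref{eqn:alpha_geodesic_eqn}, after using $(1+\alpha)/2 = (p-1)/p$ and expanding $\dot\gamma_i^2/\gamma_i$ together with the normalization sum $\sum_j \dot\gamma_j^2/\gamma_j$, produces a linear combination of the same two basis quantities. The $T$-proportional pieces cancel between the two sides, and the remainder factors as a common scalar multiple of $\tilde\alpha_i$; setting that scalar to zero collapses the $n$ component equations to the single identity $\ddot\tau = 2\dot\tau^2 S/A$, which is precisely the claimed ODE once $A$ and $S$ are written back in terms of $x+\tau u$.

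The main obstacle is the algebraic bookkeeping in this substitution. The key structural observation is that the constraint term $-\gamma_i\sum_j\dot\gamma_j^2/\gamma_j$ in the $\alpha$-geodesic equation is designed to cancel exactly the $T$-contributions arising from differentiating $S$, so the $n$ coupled identities involving both $S$ and $T$ collapse to a single scalar second-order ODE in $\tau$. Once this cancellation is in place, the rest is a mechanical simplification, and the factor $2$ (rather than $p-1$) in the ODE emerges from collecting the $\dot\tau^2 S/A^2$ contributions on both sides of the geodesic equation.
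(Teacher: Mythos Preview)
The paper does not actually supply a proof of this theorem: it is stated in Appendix~\ref{supp:alpha_conn} as a result of \citet{bauer2024p} and quoted without argument. Your proposal therefore goes beyond what the paper does. The route you outline---pulling $\tilde\gamma$ back to $\mathcal{P}_+$ via $\gamma_i=w_i^p/N^p$ and substituting into the $\alpha$-geodesic equation Eq.~\ref{eqn:alpha_geodesic_eqn}---is correct. With $A=N^p$, $S=\sum_j w_j^{p-1}u_j$, $T=\sum_j w_j^{p-2}u_j^2$, $\tilde\alpha_i=w_i^{p-1}u_i-\gamma_iS$, and $\beta_i=w_i^{p-2}u_i^2-\gamma_iT$, one finds $\ddot\gamma_i=(p\ddot\tau/A)\tilde\alpha_i-(2p^2\dot\tau^2 S/A^2)\tilde\alpha_i+(p(p-1)\dot\tau^2/A)\beta_i$, while the connection term equals $(p(p-1)\dot\tau^2/A)\beta_i-(2p(p-1)\dot\tau^2 S/A^2)\tilde\alpha_i$; the $\beta_i$ contributions cancel and the $\tilde\alpha_i$ coefficients match precisely when $\ddot\tau=2\dot\tau^2 S/A$, as you claim. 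The initial-condition check using $\sum_j x_j^{p-1}u_j=0$ for $u\in T_xS_p$ is also fine. So your argument is sound and self-contained, whereas the paper defers to the cited reference.
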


\begin{theorem}[$\tau_{x,y}$]\label{thm:reparam_int}
Let $x,y\in S_p$ and $\tau_{x,y}$ satisfy the ODE with boundary values:
\begin{align}
\ddot{\tau}(t) & =2 \frac{\sum_i(x_i+\tau(t) (y_i-x_i))^{p-1} (y_i-x_i)}{\|x+\tau(t) (y-x)\|_p^{p}} \dot{\tau}(t)^{2}, \\
\tau(0) & =0, \\
\tau(1) & =1,
\end{align}
then $\tilde\gamma:[0,1]\to S_p$ defined by
\begin{equation}
    \tilde\gamma(t)=\frac{x+\tau(t) (y-x)}{\|x+\tau(t) (y-x)\|_p}
\end{equation}
is the $\alpha$-geodesic on $S_p$, with boundary conditions $\tilde\gamma(0)=x,\tilde\gamma(1)=y$.
\end{theorem}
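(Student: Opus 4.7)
The plan is to reduce to Theorem~\ref{thm:reparam_exp} by exploiting the fact that the $\alpha$-geodesic on $S_p$ from $x$ to $y$ is the radial projection of the Euclidean chord between them, with a time reparameterization that compensates for normalization. Throughout I identify $S_p$ with $\mathcal{P}_+$ via the $\alpha$-representation, so that the $\alpha$-geodesic on $S_p$ corresponds to the $\alpha$-geodesic on $\mathcal{P}_+$ (Eq.\ref{eqn:alpha_geodesic_eqn}).

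First, I would establish that the trace of the proposed curve coincides with that of the geodesic from Theorem~\ref{thm:reparam_exp}. Setting $w := y - x$ and $u := \log^{(\alpha)}_x(y)$, both the chord $\{x + \tau w : \tau \in [0,1]\}$ and the tangent ray $\{x + \tau u : \tau \ge 0\}$ lie in the affine 2-plane through the origin, $x$, and $y$. Their radial projections onto $S_p$ therefore describe the same 1-dimensional curve up to reparameterization, so Theorem~\ref{thm:reparam_exp} already guarantees that some reparameterization of $\sigma(\tau) := (x + \tau w)/\|x + \tau w\|_p$ is an $\alpha$-geodesic from $x$ to $y$.

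Second, I would derive the ODE that pins down the correct reparameterization $\tau(t)$. Writing $\tilde\gamma(t) = \sigma(\tau(t))$ and applying the chain rule,
\begin{equation}
    \dot{\tilde\gamma} = \dot\tau\,\sigma',\qquad \ddot{\tilde\gamma} = \ddot\tau\,\sigma' + \dot\tau^{2}\,\sigma''.
\end{equation}
Substituting into the $\alpha$-geodesic equation on $S_p$ (inherited from Eq.\ref{eqn:alpha_geodesic_eqn} through $\pi^{(\alpha)}$), using the identity $\tfrac{d}{d\tau}\|x+\tau w\|_p^{p} = p\sum_i (x_i+\tau w_i)^{p-1} w_i$, and noting that the component of $\sigma''$ normal to $T_{\tilde\gamma}S_p$ is cancelled by the Christoffel correction, the remaining tangential balance isolates $\ddot\tau$ and produces exactly the stated right-hand side. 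The quadratic dependence on $\dot\tau^2$ is forced by the invariance of the trace of $\sigma$ under the rescaling $\tau\mapsto c\tau$. The boundary conditions are immediate: $x \in S_p$ gives $\tilde\gamma(0) = x$, forcing $\tau(0) = 0$; and since $x,y$ are linearly independent generically, $\tilde\gamma(1) = y$ with $\|y\|_p = 1$ forces $\tau(1) = 1$.

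The main obstacle will be the second-derivative bookkeeping, specifically verifying that the normal component of $\sigma''$ is cancelled exactly by the Christoffel correction and that the residual tangential piece matches the claimed ODE. This calculation parallels that behind Theorem~\ref{thm:reparam_exp}; the identical form of both right-hand sides suggests a common structural derivation, differing only in whether $\dot\tau(0) = 1$ or $\tau(1) = 1$ is used to fix the integration constant. A safeguard against sign or normalization errors is to check the two solvable cases of $\alpha = 0$ ($p = 2$) where $\sigma$ is the great-circle parameterization and $\tau$ can be integrated in closed form, and $\alpha = -1$ ($p = 1$) where the ambient chord already lies on $S_1$ and $\tau(t) = t$ trivially satisfies the ODE.
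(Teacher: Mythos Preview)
The paper does not prove this theorem; it is stated as a result established in \citet{bauer2024p} (see the sentence immediately preceding Theorem~\ref{thm:reparam_exp}: ``\citet{bauer2024p} established the following theorems\ldots''). There is therefore no in-paper proof to compare against.

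Your proposal is nonetheless a sound route to an independent proof. One remark: the reduction to Theorem~\ref{thm:reparam_exp} in your first step is largely cosmetic. The heart of the argument is your second step---substituting the normalized-chord ansatz $\tilde\gamma(t)=\sigma(\tau(t))$ into the geodesic equation and reading off the ODE for $\tau$---and that computation is self-contained: it simultaneously verifies that the trace is a geodesic and that $\tau$ must satisfy the stated ODE, without needing to know in advance (via Theorem~\ref{thm:reparam_exp}) that \emph{some} reparameterization works. Indeed, the ODEs in Theorems~\ref{thm:reparam_exp} and~\ref{thm:reparam_int} are formally identical with $u$ replaced by $y-x$, so the same substitution-and-cancellation calculation proves both, differing only in whether one imposes $\dot\tau(0)=1$ or $\tau(1)=1$. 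Your 2-plane argument is correct (since $u=\widetilde{\log}^{(\alpha)}_x(y)$ is a linear combination of $x$ and $y-x$ by Eq.~\ref{eqn:mapped_log}--\ref{eqn:proj_vf}), but it is not strictly needed. The sanity checks at $p=1,2$ are well chosen.
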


% With constant scaling and translation for each $\alpha$-geodesic, we can also easily obtain the corresponding expression for the standard $\alpha$-representation in Eq.\ref{eqn:mapping}. 
The $\alpha$-geodesic on $\mathcal{P}_+$ can be easily obtained by applying the inverse mapping of Eq.\ref{eqn:mapping} on the solved $\tilde\gamma(t)$ on $S_p$ as $\gamma=(\pi^{(\alpha)})^{-1}(\tilde\gamma)$, as demonstrated in Fig.\ref{fig:geodesic}.
The closed-form solutions of $\tau_{x,u},\tau_{x,y}$ for general $\alpha\in(-1,1)$ are unknown. Nonetheless, it is possible to obtain the solutions with numerical ODE solvers on the fly at the cost of increased training and inference time. See Appendix~\ref{supp:solve_reparam} for the detailed implementation.

\subsection{Closed-Form $\alpha$-Geodesics} \label{supp:solve_geodesic}
As discussed in Section~\ref{sec:classification}, closed-form solutions are known for the special cases when $\alpha=0,\pm 1$, making practical training feasible in these cases. We will elaborate further on how to deduce these solutions.

\paragraph{$m$-Geodesic.} For $\alpha=-1,p=1$, both ODEs can be simplified to $\ddot\tau=0$, giving the linear solution of $\tau(t)=t$ in both reparameterizations. This leads to the linear interpolation between categorical distributions. One can also directly solve the $\alpha$-geodesic equation in Eq.\ref{eqn:alpha_geodesic_eqn} with either the initial condition $\gamma(0)=\mu,\dot\gamma(0)=a$ or the boundary condition $\gamma(0)=\mu,\gamma(1)=\nu$, which gives exactly the same results.

\paragraph{Levi-Civita Geodesic.} For $\alpha=0,p=2$, the $\alpha$-connection is the Levi-Civita connection, which is compatible with the Fisher information metric. The mapped representations $x=\sqrt\mu$ lie on the unit sphere, and with such symmetry, we can assume one point always lies on the vertex of the simplex. The vector field can then be represented by a single scalar in the 2D intersection of the sphere, perpendicular to the base vector. For $\tau_{x,u}$, the ODE can be reduced to
\begin{equation}
    \ddot{\tau}(t)=\frac{2\tau u}{1+\tau^2 u^2}\dot{\tau}(t)^{2},
\end{equation}
which gives the solution of $\tau(t)=\tan(ut)/u$. One can also verify that the following solution
\begin{equation}
    \tau_{x,y}(t)=\frac{\tan t\theta}{\sin\theta+(1-\cos\theta)\tan t\theta},\quad\theta=\arccos(\langle x,y\rangle)
\end{equation}
satisfies the BVP and gives $\dot\tau_{x,y}(0)=\theta/\sin\theta$. With proper substitution, the general exponential and logarithm maps can be written as
\begin{align}
    \widetilde{\exp}^{(0)}_x u&=\frac{x+u\cdot\tan\|u\|_2/\|u\|_2}{1+\tan^2\|u\|_2}=x\cos\|u\|_2+\frac{u}{\|u\|_2}\sin\|u\|_2,\\
    \widetilde{\log}^{(0)}_x y&=\frac{\theta}{\sin\theta}P_x^{(0)}(y-x)=\frac{\arccos(\langle x,y\rangle)}{\sqrt{1-\langle x,y\rangle^2}}(y-x-\langle x,y-x\rangle x).
\end{align}
This is exactly the same as the standard exponential and logarithm maps on the unit sphere.

\paragraph{$e$-Geodesic.} The above two theorems cannot be applied to the limit case of $\alpha=1,p=\infty$. Nonetheless, $e$-geodesics have been well-explored in information geometry, and it is known that the normalization of the unconstrained $\alpha$-geodesics does give the correct $e$-geodesics in this case. One can verify that the following solution
\begin{equation}
    \gamma(t)=\softmax(\log \mu+at/\mu)
\end{equation}
does satisfy the $\alpha$-geodesic equation in Eq.\ref{eqn:alpha_geodesic_eqn}. See Proposition 2.5 in \citet{ay2017information} for a detailed deduction. This leads to closed-form exponential and logarithm maps as
\begin{align}
    \exp^{(e)}_\mu a&=\softmax(\log \mu+a/\mu),\\
    \log^{(e)}_\mu \nu&=\mu(\log\nu-\log\mu+D_\text{KL}(\mu\|\nu)).
\end{align}
After the $\alpha$-representation transform $\mu\mapsto \log \mu$ and the corresponding vector field transform $a\mapsto a/\mu$, these maps are exactly the same as those introduced in Table~\ref{tab:class}.

\section{Technical Details}
In this section, we provide additional mathematical details regarding the propositions and theorems in the main text.

\subsection{Continuity of $\alpha$-Geometry}\label{supp:alpha_limit}
In Definition~\ref{def:alpha_rep}, we define the $\alpha$-representation following previous work \citep{kimura2024density,amari2000methods,ay2017information}. However, one may notice a discontinuity at $\alpha=1,p=\infty$. We now further elaborate on the continuity of $\alpha$-flow at $\alpha=1$.

\paragraph{$\alpha$-Representation}
The discontinuity of $\alpha$-Representation in the main context can be avoided by using a slightly modified definition of $\alpha$-representation as
\begin{equation}
    \hat{\pi}^{(\alpha)}:\mu\mapsto \frac{2}{1-\alpha}\left(\mu^{(1-\alpha)/2}-1\right)=p(\mu^{1/p}-1) . \label{eqn:alt_mapping}
\end{equation}
The above definition coincides with Tsallis's $q$-logarithm $\log_q \mu$ with $q=(1+\alpha)/2$ and only differs by a constant scaling and translation from our original definition. The limit $\lim_{p\to\infty}p(\mu^{1/p}-1) =\log \mu$ holds for all $\mu>0$, which reduces to our original definition. The modified $\alpha$-representations of a Bernoulli distribution are visualized in Figure~\ref{fig:representation}, in which a continuous family of functions (i.e., homotopy of geodesics) can be observed.
Similarly, for the vector field mapping, taking the derivative with respect to $t$ on both sides of Eq.\ref{eqn:alt_mapping}, we obtain a unified mapping:
\begin{equation}
    u=\frac{\mu^{1/p}}{\mu}a,
\end{equation}
which further reduces to $u=a/\mu$ for $p=\infty$.

\begin{figure}[ht]
    \centering
    \includegraphics[width=0.4\linewidth]{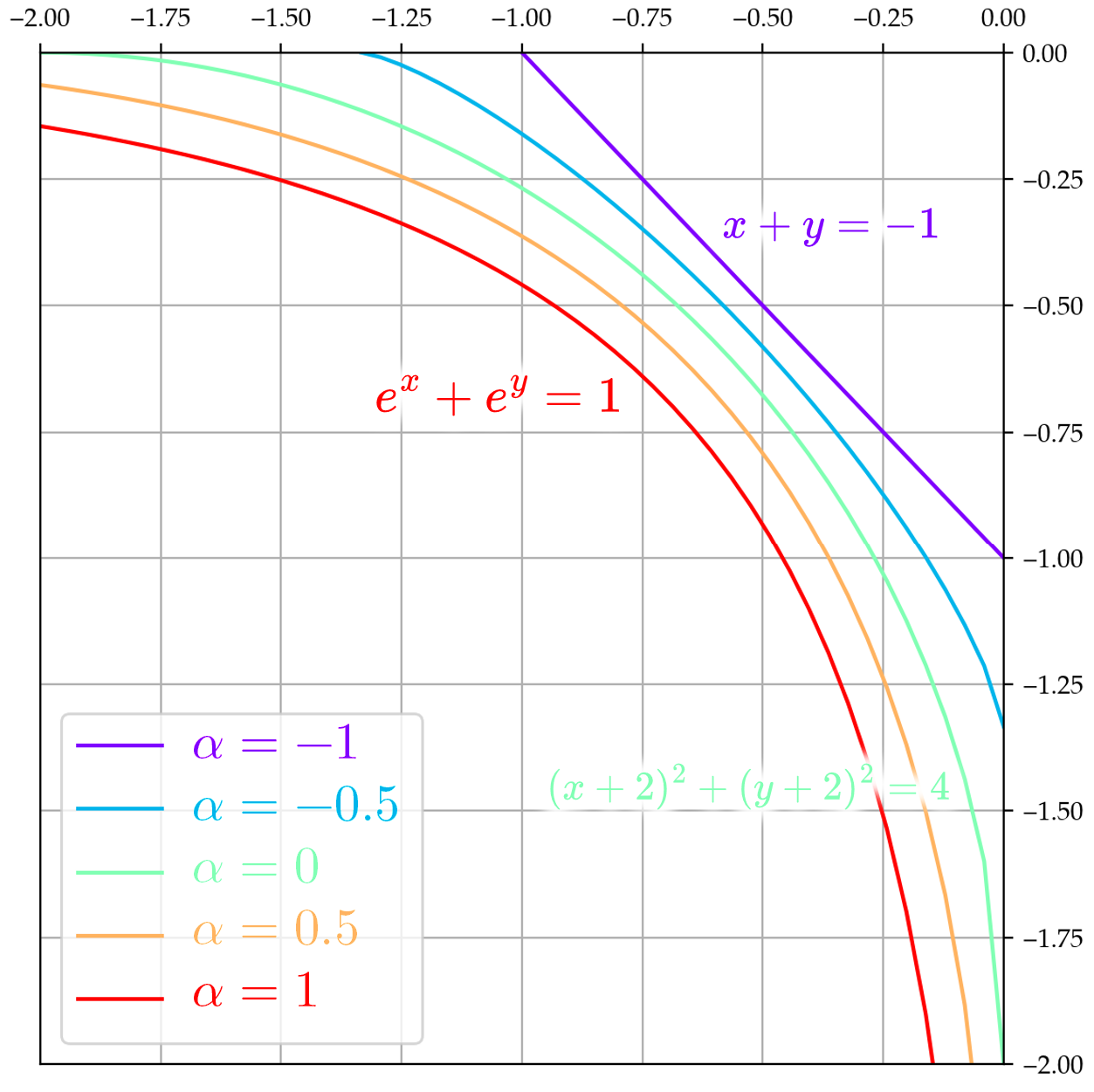}
    \vspace{-0.5em}
    \caption{Different modified $\alpha$-representations of a Bernoulli distribution.}
    \label{fig:representation}
\end{figure}

\paragraph{Exponential and Logarithm Maps}
We now discuss the limit case for the original exponential and logarithm maps in Eq.\ref{eqn:ori_exp} and \ref{eqn:ori_log}. Consider the numerator in Eq.\ref{eqn:ori_exp} under the limit case when $p\to\infty$, we have:
\begin{equation}
\begin{aligned}
    \left(\mu^{1/p}+\frac{\tau\mu^{1/p}}{p\mu}\right)^p
    &=\left(1+\frac{\hat{\pi}^{(\alpha)}(\mu)}{p}+\frac{\hat{\pi}^{(\alpha)}(\mu)+p}{p^2\mu}a\right)^p\\
    &=\left(1+\frac{\hat{\pi}^{(\alpha)}(\mu)+a/\mu}{p}+\frac{\hat{\pi}^{(\alpha)}(\mu)}{p^2\mu}a\right)^p\\
    &\to e^{\hat{\pi}^{(\alpha)}(\mu)+a/\mu}=\mu e^{a/\mu} .
\end{aligned}
\end{equation}
In the limit, we use $(1+a/p)^p\to e^a$ as $p\to\infty$. Therefore, we have
\begin{equation}
    \exp^{(e)}_\mu a=\frac{\mu e^{a/\mu}}{\sum_j\mu_j e^{a_j/\mu_j}}=\softmax(\log \mu+a/\mu) .
\end{equation}

For the logarithm map in Eq.\ref{eqn:ori_log}, let $w:=\mu/\nu$, we have
\begin{equation}
\begin{aligned}
    P_\mu\left(\mu\hat\pi^{(\alpha)}(w)\right)
    &=\mu\hat\pi^{(\alpha)}(w)-\mu\sum_j\mu_j\hat\pi^{(\alpha)}(w)_j\\
    &=p\mu\left(w^{1/p}-1\right)-\mu\sum_jp\mu_j\left(w_j^{1/p}-1\right)\\
    &=p\mu w^{1/p}-p\mu\sum_j \mu_j w_j^{1/p}-p\mu+p\mu\sum_j \mu_j\\
    &=p\left(\mu w^{1/p}-\mu\sum_j \mu_j w_j^{1/p}\right)=pP_\mu(\mu w^{1/p}) .
\end{aligned}
\end{equation} 
Therefore, the following identity holds:
\begin{equation}
    \log^{(\alpha)}_\mu\nu= \dot\tau_{\mu,\nu}(0)pP_\mu \left(\mu(\nu/\mu)^{1/p}\right)=\dot\tau_{\mu,\nu}(0) P_\mu\left(\mu \hat{\pi}^{(\alpha)}(\nu/\mu)\right). \label{eqn:log_identity}
\end{equation}
Taking the limit $\alpha\to 1$, we get
\begin{equation}
    \log^{(1)}_\mu\nu=P_\mu\left(\mu \log(\nu/\mu)\right)=\mu\log\frac{\nu}{\mu}-\mu\sum_j \mu_j\log\frac{\nu_j}{\mu_j},
\end{equation}
which recovers $\log^{(e)}_\mu(\nu)$. The interpolation along $e$-geodesic can therefore be calculated as:
\begin{equation}
    \gamma_{\mu,\nu}(t)=\exp^{(e)}_\mu \left(t \log^{(e)}_\mu\nu\right)=\softmax((1-t)\log\mu+t\log\nu).
\end{equation}

\paragraph{$\alpha$-Energy}
The optimality results we have established in Theorem~\ref{thm:local_optim} and \ref{thm:global_optim} require the definition of $\alpha$-energy in Eq.\ref{eqn:alpha_energy}, which is ill-defined for $p=\infty$ so far. To circumvent such an issue, we follow the standard technique to consider the rescaled energy functional defined as
\begin{equation}
    \tilde{E}^{(\alpha)}(\mu):=\left(\int_0^1 F^p(\gamma,\dot{\gamma}) \diff t\right)^{1/p}.
\end{equation}
The two functionals are related by $E^{(\alpha)}(\mu)=\frac{1}{p}\tilde{E}^{(\alpha)}(\mu)^p$, which is a strictly monotonic transformation that does not affect the optimality of the original functional. Therefore, taking the limit of the rescaled energy functional, we obtain:
\begin{equation}
    \tilde{E}^{(\infty)}(\mu)=\esssup_{t\in[0,1]}\|F(\gamma,\dot\gamma)\|_\infty=\esssup_{t\in[0,1]}\max_{1\le i\le n}\left|\frac{\dot\gamma_i}{\gamma_i}\right|,
\end{equation}
where $\esssup$ denotes the essential supremum. In this way, the optimality of $\alpha=1$ can be understood as finding the optimal smooth curves with the minimal value of the max-relative-velocity functional, which measures the worst-case instantaneous rate of change relative to the probability coordinate. 

So far, we have concluded the continuity of $\alpha$-geometry at $\alpha=1$ as a consistent family of generative models. As scaling and translation do not impact our theoretical claims, we adopt the simpler formulation of $\alpha$-geometry in the main text. 

\subsection{Proof for Proposition \ref{prop:div_taylor}} \label{supp:proof_div_taylor}
Before proving Proposition \ref{prop:div_taylor}, we need to first introduce the definition of $\alpha$-divergence on $\mathcal{M}_+$ for derivative calculation.

\begin{definition}[$\alpha$-divergence on $\mathcal{M}_+$] \label{def:alpha_div_m}
The following function $D^{(\alpha)}:\mathcal{M}_+\times \mathcal{M}_+\to\mathbb{R}_+$ is called the $\alpha$-\emph{divergence}:
\begin{equation}
    D^{(\alpha)}(m\|n):=\frac{2}{1-\alpha}\sum_{i}n_i+\frac{2}{1+\alpha}\sum_i m_i-\frac{4}{1-\alpha^2}\sum_{i}n_i^{\frac{1+\alpha}{2}}m_i^{\frac{1-\alpha}{2}} .
\end{equation}
Taking the limit, we obtain
\begin{equation}
    D^{(-1)}(m\|n)=D_\text{KL}(m\|n)=\sum_{i}n_i-\sum_i m_i+\sum_{i}m_i\log\frac{m_i}{n_i},\quad
    D^{(1)}(m\|n)=D_\text{KL}(n\|m) .
\end{equation}
\end{definition}

The restriction of the above $\alpha$-divergence to $\mathcal{P}_+\subseteq\mathcal{M}_+$ recovers Definition~\ref{def:alpha_div}. However, when calculating the partial derivatives, the results will be different. We shall use the above general definition.

\begin{proof}[Proof for Proposition \ref{prop:div_taylor}]
Taking the partial derivatives with respect to the second argument $n$ on the generalized KL divergence in Definition~$\ref{def:alpha_div_m}$, we obtain:
\begin{equation}
    \frac{\partial}{\partial n}D_\text{KL}(m\|n)=1-\frac{m}{n}.
\end{equation}
Restriction on $\mathcal{P}_+$ and $\mu=\nu$ gives
\begin{equation}
     \left.\frac{\partial}{\partial \nu}D_\text{KL}(\mu\|\nu)\right|_{\nu=\mu}=1-\frac{\mu}{\nu}=0 .
\end{equation}
This indicates all first-order partial derivatives are zero for local $D_\text{KL}(\mu\|\nu)$. This is also a direct result of the fact that $D_\text{KL}(\mu\|\nu)$ achieves the global minimum value of 0 when $\mu=\nu$.
Further taking the second-order derivatives with respect to the second argument $n$, we get
\begin{equation}
    \frac{\partial^2}{\partial n^2}D_\text{KL}(m\|n)=\frac{m}{n^2}.
\end{equation}
All the mixed second-order partial derivatives vanish. The restriction now gives:
\begin{equation}
     \left.\frac{\partial^2}{\partial \nu^2}D_\text{KL}(\mu\|\nu)\right|_{\nu=\mu}=\frac{\mu}{\nu^2}=\frac{1}{\mu} .
\end{equation}
Therefore, the second-order Tayler expansion of $D_\text{KL}(\mu\|\nu)$ at $\nu=\mu$ gives
\begin{equation}
    D_\text{KL}(\mu\|\mu+\mathrm{d}\mu)\approx \sum_i\frac{\mathrm{d}\mu_i^2}{\mu_i}=\|\mathrm{d}\mu\|_g^2,
\end{equation}
with a higher-order error of $o(\|\mathrm{d}\mu\|_g^2)$.
\end{proof}

\begin{remark}
It is also possible to directly calculate the second-order derivatives of the standard definition of the KL divergence for two categorical distributions with respect to $\mu$, as the additional linear terms in Definition~\ref{def:alpha_div_m} vanish in second-order derivatives.
\end{remark}

\begin{remark}
With similar calculation, one can demonstrate that a similar result holds for all $\alpha$-divergence:
\begin{equation}
    D^{(\alpha)}(\mu\|\mu+\mathrm{d}\mu)=\frac{1}{2}\|\mathrm{d}\mu\|_g^2 +o\left(\|\mathrm{d}\mu\|_g^2\right),\quad \mu\in\mathcal{P}_+ .
\end{equation}
This indicates that all $\alpha$-geometries are built upon the same Fisher information metric. Only third-order partial derivatives of $D^{(\alpha)}$ (which coincide with the Christoffel symbols) can be used to distinguish the different geometries.
\end{remark}

\subsection{Proof for Proposition \ref{prop:gradient}} \label{supp:proof_grad}
\begin{proof}
Taking the partial derivatives with respect to the first argument $m$ on the generalized $(-\alpha)$-divergence in Definition \ref{def:alpha_div_m}, we obtain:
\begin{equation}
    \frac{\partial}{\partial m}D^{(-\alpha)}(m\|n)=\frac{2}{1-\alpha}\left(1-\left(\frac{m}{n}\right)^{(1-\alpha)/2}\right)=p\left(1-\left(\frac{m}{n}\right)^{1/p}\right).
\end{equation}
The restriction on $\mathcal{P}_+$ gives
\begin{equation}
    \frac{\partial}{\partial \mu}D^{(-\alpha)}(\mu\|\nu)=p\left(1-\left(\frac{\mu}{\nu}\right)^{1/p}\right)=-\hat\pi^{(\alpha)}(\mu/\nu) \label{eqn:partial_div}
\end{equation}
where $\hat\pi^{(\alpha)}$ is the modified $\alpha$-representation in Eq.\ref{eqn:alt_mapping}.
From \citet{ay2017information}, Chapter 2.3, the gradient field on the statistical manifold can be obtained as
\begin{equation}
    \grad_\mu A=\mu\frac{\partial A}{\partial\mu}.
\end{equation}
Therefore, we have
\begin{equation}
    P_\mu\left(-\grad_\mu D^{(-\alpha)}(\cdot\|\nu)\right)=P_\mu\left(\mu \hat{\pi}^{(\alpha)}(\nu/\mu)\right).
\end{equation}
Comparison to Eq.\ref{eqn:log_identity} immediately concludes the proof.
\end{proof}

\begin{remark}
This relation between the partial derivatives of the $\alpha$-divergence and the $\alpha$-representation in Eq.\ref{eqn:partial_div} is not coincidental. In fact, the standard way of defining $\alpha$-divergence for positive measures follows the gradient-based approach such that
\begin{equation}
    \log_m n=-\grad_m D_n(m).
\end{equation}
See \citet{ay2017information}, Chapter 2.7.1 for details. The restriction on $\mathcal{P}_+$ only differs by the projection onto the tangent space $T_\mu \mathcal{P}$ and the scaling factor of $\dot\tau_{\mu,\nu}(0)$.
\end{remark}

\section{Model Parameterization}
This section provides additional details regarding the model parameterization and sampling.

\subsection{Solving for Reparameterization}\label{supp:solve_reparam}
For the $\alpha$ values other than $\pm1,0$, we need to explicitly solve the reparameterizations $\tau_{\mu,\nu},\tau_{\mu,a}$ on the fly as described in Appendix~\ref{supp:alpha_conn}. We use the Euler method with 100 steps to solve the initial value problem (IVP) for $\tau_{x,u}$ and use the shooting method to solve the boundary value problem (BVP) for $\tau_{x,y}$. Furthermore, noting that $\tau(t)$ is monotone in $t$, we binary-search for the best $\dot\tau(0)$ values from the range of $[1,2]$ with up to 10 iterations and a tolerance of $10^{-3}$ for early stopping. In this way, we are able to batch the calculation.

It is clear to see that the additional computational overhead caused by solving the reparameterization does not rely on the model. Therefore, for larger models, such an overhead is less significant. Indeed, in Table~\ref{tab:time}, we provide a quantitative benchmark on the computational overheads of $\alpha=\pm0.5$ on different datasets, where it takes about 20-50\% more time, which is still practically feasible. The overheads in the GPU memory are also negligible (less than 1GB or even the same) compared to closed-form geodesics and DS-DFM models.
We also note that most of the overhead comes from solving the BVP, where multiple full rounds of the Euler solver may be required. However, with a more advanced BVP solver (e.g., \citet{bvpsolver} that utilizes the Jacobian information), it is possible to further accelerate the process, reducing the computational overhead further.

\begin{table}[ht]
\centering
\caption{Computational overheads with respect to $\alpha=0$ during the training stage.}\label{tab:time}
\begin{tabular}{@{}lccc@{}}
\toprule
Overhead/+\% & BMNIST & UniRef50 & Text8 \\ \midrule
$\alpha=-0.5$ & 149 & 61 & 49 \\
$\alpha=0.5$ & 27 & 40 & 21 \\ \bottomrule
\end{tabular}
\end{table}
 
\subsection{Vector Field Parameterization}
As discussed in Section~\ref{sec:alpha_geometry}, our proposed $\alpha$-flow learns the mapped vector field in Eq.\ref{eqn:vf_mapping} on the mapped $\alpha$-geodesic for generative modeling. Following \citet{chen2023riemannian}, we project the predicted vector field to the tangent space $T_xS_p$ as
\begin{equation}
    v_\theta(x_t,t):=P^{(\alpha)}_{\mu_t}\big(\tilde{v}_\theta(x_t,t)\big)
\end{equation}
where $\tilde{v}_\theta$ is the unconstrained predictor with the projection operator $P_\mu^{(\alpha)}$ defined in Eq.\ref{eqn:proj_vf}. This ensures that the final vector field lies in the tangent space, and the exponential map is well-defined. Compared with Eq.\ref{eqn:mapped_log}, the unconstrained vector field effectively learns the linear difference $y-x$ just as the standard flow matching model on the Euclidean space.

The linear flow matching also follows a similar parameterization where we subtract the mean from the predicted vector field to ensure it lies on the tangent space. The standard Euclidean norm is applied to measure the difference in vector fields. 
On the other hand, for the log-linear flow matching, we follow \citet{mahabadi2023tess} to assign the target one-hot distribution with a logit of $n\delta-1$, where $n$ is the number of discrete classes. Unlike other CS-DFM models where the prior noise distribution $p_0$ is defined as the uniform distribution over the simplex, the prior distribution on the logits for the log-linear flow matching follows the standard normal distribution, as proposed in \citet{mahabadi2023tess}. With the Euclidean assumption, the vector field on the logits space has no constraint, and the Euclidean norm is applied to calculate the loss.

We also note that, although the $\alpha$-geodesics and the corresponding exponential and logarithm maps are well-defined on the positive categorical distributions on $\mathcal{P}_+$, we may encounter numerical issues near the boundary.
Therefore, when calculating the Riemannian norm, we clamp $\mu$ with a minimum value of $10^{-3}$ for the mixture-class model. For the exponential-class model, the minimum $\mu$ is set to $10^{-3}$ such that the logit mapping $\pi^{(1)}$ will be meaningful. The vector fields are also calculated in a numerically stable way to avoid undefined behavior.
For DS-DFM models, $t$ is clamped with a maximum value of 0.999 to avoid division by zero when calculating the loss.

\subsection{Discrete-State DFM Parameterization}
For a fair comparison between CS- and DS-DFM models, we use the same encoder architecture except for the additional mask token for the latter. In this way, the output of the unconstrained predictor for DS-DFM models is treated as logits. Following the standard masked diffusion formulation \citep{sahoo2024simple,shi2024simplified,gat2024discrete}, the DS-DFM loss can be written as (assuming linear interpolation paths):
\begin{equation}
    \mathcal{L}=-\mathbb{E}_{t,q(\mu)}\left[\frac{1}{1-t}\sum_{\delta_t=m}\log \softmax v_\theta(\delta_t,t)\right],
\end{equation}
where $\delta_t\sim \mu_t=(1-t)m+t\mu_1$ is the intermediate one-hot distribution and $m$ is the one-hot mask distribution. Note that the summation is only over the masked tokens. For masked DS-DFM, the prior noise is deterministically set to the mask token.

\subsection{Sampling from DFM Models}
The sampling of $\alpha$-flow follows exactly the same procedure as any other Riemannian flow matching models. We provide the sampling algorithm using the Euler method in Algorithm~\ref{alg:sample}. We do not use off-the-shelf adaptive ODE solvers to ensure that the trajectory does not leave the manifold and that the exponential maps in Eq.\ref{eqn:mapped_exp} are correctly calculated.

\begin{algorithm}[ht]
\caption{Sampling from $\alpha$-flow}\label{alg:sample}
\begin{algorithmic}[1]
\STATE Sample $\mu_0\sim p_0(\mu)$, apply $x_0=\pi^{(\alpha)}(\mu_0)$ in Eq.\ref{eqn:mapping}.
\FOR{$t\gets 0,1/N,\dots,1-1/N$}
    \STATE $x_{t+1/N}=\widetilde{\exp}_{x_0}^{(\alpha)}(v_\theta(x_t,t)/N)$ using Eq.\ref{eqn:mapped_exp}.
\ENDFOR
\STATE \textbf{Return:} Sample from $\mu_1=(\pi^{(\alpha)})^{-1}(x_1)$.
\end{algorithmic}
\end{algorithm}

We follow the original papers and their official code of MDLM \citep{sahoo2024simple} and DFM \citep{gat2024discrete} to implement their sampling process, which we will briefly outline here. For MDLM, a single-step update follows:
\begin{equation}
    \delta_{t+1/N}\sim \frac{(1-t-1/N)m+v_\theta(\delta_t,t)/N}{1-t}.
\end{equation}
For DFM, a single-step update follows:
\begin{equation}
    \delta_{t+1/N}\sim m\cdot e^{-1/N}+v_\theta(\delta_t,t)\cdot (1-e^{-1/N}).
\end{equation}
Both updates only apply to the masked tokens; any unmasked token will remain unchanged.

\section{Experimental Setup}
This section provides additional details regarding the dataset and model architecture. Details of the training and evaluation stages are also provided.

\subsection{Swiss Roll}
The Swiss roll on the 2-simplex is adapted from \citet{cheng2024categorical}, in which the 2D Swiss roll is projected onto the 2-simplex to give a 3-category generative modeling task. We use a simple 2-layer multi-layer perceptron (MLP) as the predictor with additional time embeddings concatenated to the input for all CS-DFM models. We sampled 10k points as the training data and trained each model for 2000 epochs with full-batch gradient descent. 
We generated 10k points using 1000 Euler steps for each trained different instantiation of $\alpha$-flow with $\alpha=\pm1,\pm0.5, 0$.
To evaluate the generation quality, we used Gaussian kernel density estimation (KDE) with a bandwidth of 0.02 to estimate the density on the simplex (see Figure~\ref{fig:swissroll}). KL divergence was calculated between the estimated densities of the generation and the data.

\subsection{Binarized MNIST}
We followed \citet{cheng2024categorical} to use the preprocessed binarized MNIST dataset from \citet{salakhutdinov2008quantitative}, which has a split of 50k/10k/10k. We used the same CNN-based predictor from \citet{cheng2024categorical}, which is, in turn, adapted from \citet{song2020improved}. The resulting model has about 30M trainable parameters. Each model was trained on a single NVIDIA A100 GPU for 100k iterations with a batch size of 256 and an initial learning rate of $3\times10^{-4}$, which took about 10 hours.

The calculation of FID scores also followed \citet{cheng2024categorical}, in which 1000 images were sampled for each model using 300 Euler steps. The ground truth statistics were calculated on the whole training set using the pre-trained InceptionV3 \citep{szegedy2016rethinking} model.

\subsection{Text8}
The Text8 vocabulary consists of the 26 lowercase letters and the whitespace token. We followed \citet{campbell2024generative,shi2024simplified} to use a split of 90M/5M/5M characters and a fixed sequence length of 256 with the sliding window when sampling from the training set. Following \citet{cheng2024categorical}, the DiT-based predictor has about 92M trainable parameters. Each model was trained on 8 NVIDIA A100 GPUs for 500k iterations with a total batch size of 1M characters, totaling up to 500B characters. We used an initial learning rate of $3\times 10^4$. The training stage for each model took about 5 days.

Following \citet{campbell2024generative,cheng2024categorical}, we utilized the pre-trained GPT-J-6B model \citep{gpt-j} as a model-agnostic approach for the quantitative evaluation of the generation quality. A total of 1M characters were sampled for each model using 256 Euler steps and were then tokenized with the original tokenizer for GPT-J-6B. NLL and entropy were calculated on the token level. 
Note that since the GPT-J-6B model has not been trained on Text8, its NLLs do not necessarily reflect the fitness of the data distribution. Previous work \citep{campbell2024generative} indicates that these NLLs can be fooled into arbitrarily low values by repeating high-frequency but meaningless words. Still, such a comparison can provide insightful results.

\subsection{UniRef50}
We used the split in \citet{alamdari2023protein}, which has about 42M sequences in the training set. We followed \citet{wang2024diffusion,wang2024dplm} to use a maximum sequence length of 1024 (including the \texttt{<cls>} and \texttt{<eos>} tokens, so the pure protein sequence has a maximum length of 1022), and proteins longer than 1022 were chunked into shorter sequences of length 1022. As the protein sequence length may vary, we followed \citet{alamdari2023protein} to batch proteins of similar lengths together for efficiency. The ESM-2 tokenizer \citep{lin2023evolutionary} was used for all models, with a vocabulary size of 33 (including the mask token and other special tokens).
We used a 12-layer DiT model as the predictor with about 92M trainable parameters. Each model was trained for 200k iterations with a maximum batch size of 800k tokens (which may be smaller due to the batching strategy), totaling up to 160B tokens. We used an initial learning rate of $3\times 10^4$. The training stage for each model took about 40 hours on 8 NVIDIA A100 GPUs.

For each length $n\in[100,200,300,400,500,1000]$, we sampled 100 sequences from each model with a specified length of $n+2$ (to allow for the \texttt{<cls>} and \texttt{<eos>} tokens) and 256 Euler steps. The generated sequences were post-processed as follows:
\begin{enumerate}[topsep=0pt,itemsep=-1ex,partopsep=1ex,parsep=1ex]
    \item The first \texttt{<cls>} token and the last \texttt{<eos>}, if generated, were stripped.
    \item If any \texttt{<eos>} token appeared in the last 10 positions, it was stripped with any token after it. This is because, during training, we padded protein sequences with different lengths for batched training. Therefore, it is possible that the model will generate the \texttt{<eos>} early.
    \item Any other special tokens were substituted with \texttt{X}, the unknown amino acid.
\end{enumerate}
Following \citet{wang2024diffusion,wang2024dplm}, ESMFold \citep{lin2023evolutionary} was used to fold the proteins and calculate the pLDDT scores to measure the foldability of the generations. Following \citet{frey2023protein}, the Fréchet ESM distance (FED) was calculated using the 650M version of ESM-2 \citep{lin2023evolutionary}. We extracted the last hidden activations of the \texttt{<cls>} token as the sequence representation for calculating the statistics. The ground truth statistics were calculated on the sequences whose length is exactly $n$ in the training set. This gives 151708/90044/59531/37513/21528/3004 sequences, respectively, for each length.

\section{Additional Visualizations}\label{supp:visualization}
We provide additional visualizations in this section.

\begin{figure}[htb]
    \centering
    \includegraphics[width=.5\linewidth]{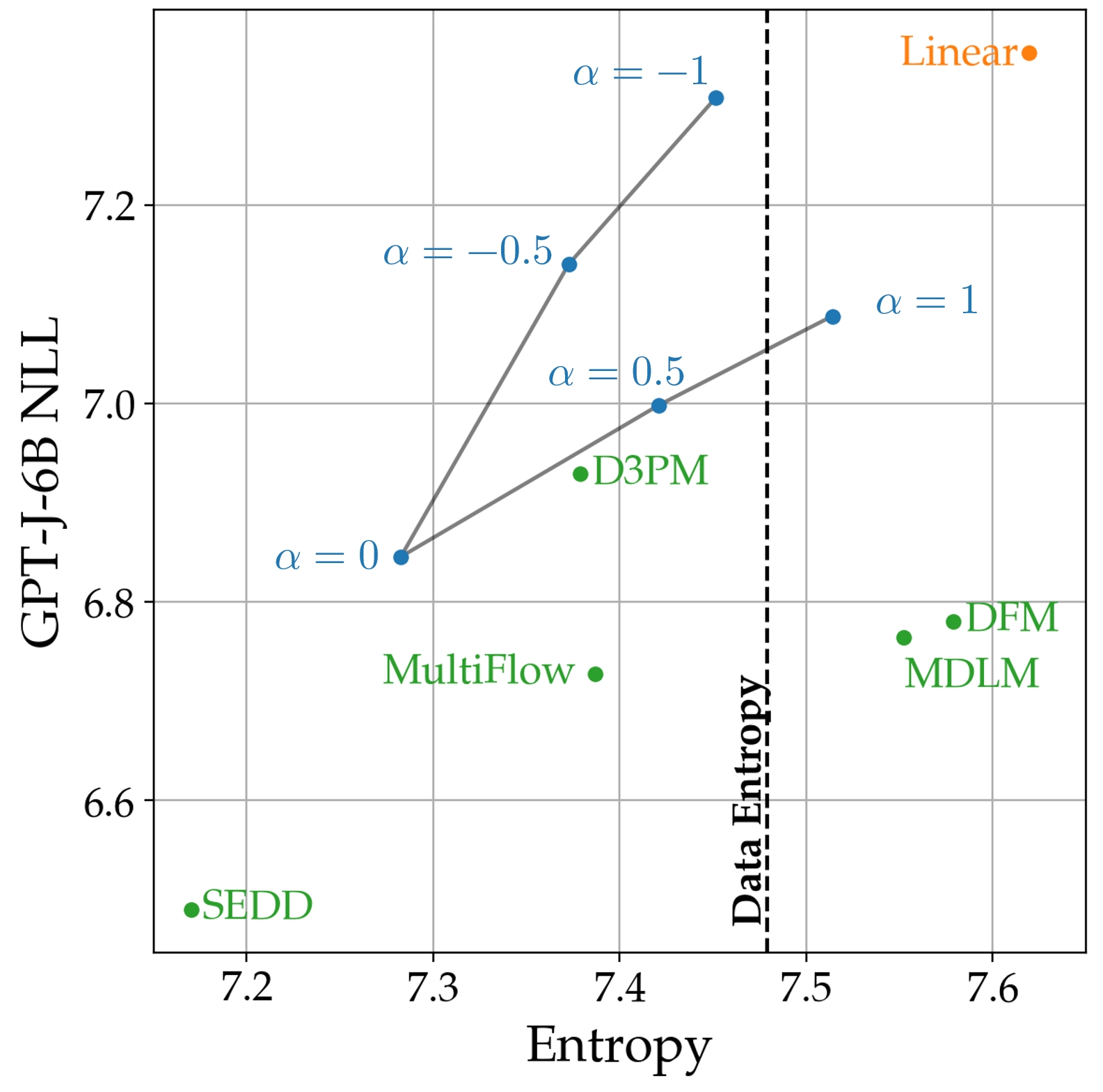}
    \vspace{-0.5em}
    \caption{Generation NLLs (evaluated by GPT-J-6B) vs entropy for different discrete generative models. Different $\alpha$-flow variants are colored in blue; DS-DFM models are colored in green.}
    \label{fig:text8}
\end{figure}

In Figure~\ref{fig:text8}, we provide additional visualization of the GPT-J-6B NLL and the entropy of the generated samples for each model. The data entropy is plotted as a vertical line, and an entropy closer to the data entropy is preferred. The five variants of $\alpha$-flow are connected, in which we can observe the impact of $\alpha$ on the trade-off between NLL and entropy. Specifically, $\alpha=0$ achieved the best NLL, whereas $\alpha=\pm1$ achieved the best entropy.

\begin{figure}[htb]
    \centering
    \includegraphics[width=\linewidth]{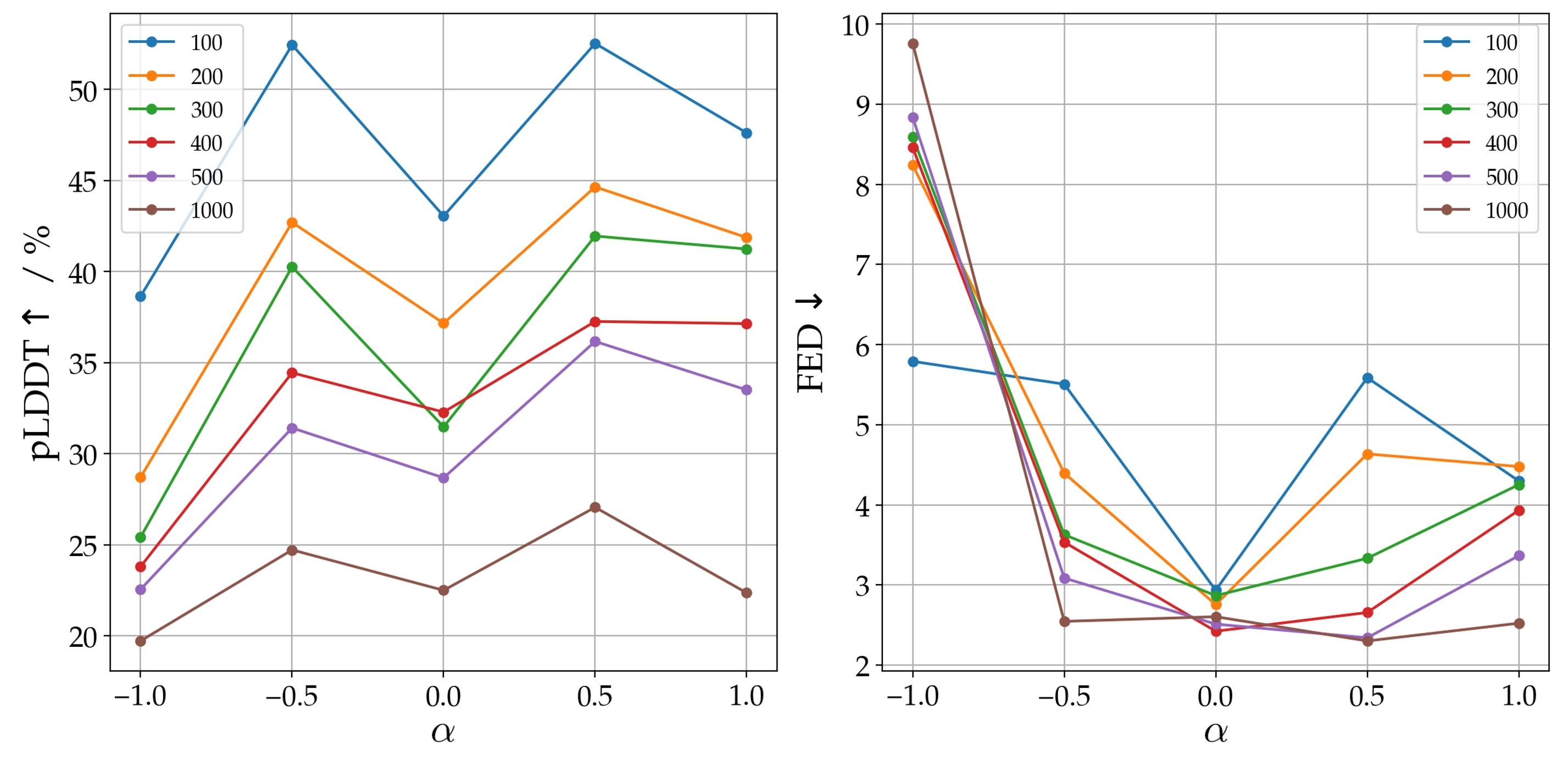}
    \vspace{-1em}
    \caption{pLDDT and FED scores with respect to different $\alpha$ values.}
    \label{fig:uniref_metric}
\end{figure}

In Figure~\ref{fig:uniref_metric}, we provide an alternative visualization of the impact of $\alpha$ on the metrics of pLDDT and FED. It can be demonstrated more clearly that $\alpha=0$ consistently achieves good FED scores on both shorter and longer protein sequences. On the other hand, $\alpha\pm0.5$ achieve better pLDDT scores, especially on shorter proteins.

% \input{tabs/text8_sample}

%%%%%%%%%%%%%%%%%%%%%%%%%%%%%%%%%%%%%%%%%%%%%%%%%%%%%%%%%%%%%%%%%%%%%%%%%%%%%%%
%%%%%%%%%%%%%%%%%%%%%%%%%%%%%%%%%%%%%%%%%%%%%%%%%%%%%%%%%%%%%%%%%%%%%%%%%%%%%%%

\end{document}